\newcommand{\figdir}{.}
\newcommand{\myqvar}[1]{\ensuremath{\sigma(#1)}}
\newcommand{\myqvarsq}[1]{\ensuremath{\sigma^2(#1)}}
\theoremstyle{plain}
\newtheorem{theorem}{Theorem}
\newtheorem{lemma}{Lemma}
\newtheorem{corollary}{Corollary}
\theoremstyle{definition}
\newtheorem{example}{Example}
\theoremstyle{remark}
\newcommand{\xstate}[1]{\ensuremath{x_{#1}}}
\newcommand{\myoperator}[1]{\mathcal{#1}}
\newcommand{\detit}[1]{\ITCOMMAND{b}{#1}}
\newcommand{\ITCOMMAND}[2]{\ensuremath{#1}_{#2}}
\newcommand{\pathit}[1]{\ITCOMMAND{P}{#1}}
\newcommand{\avarit}[1]{\ITCOMMAND{a}{#1}}
\newcommand{\zerovec}{\ensuremath{\mathbb{0}}}
\newcommand{\thetait}[1]{\ITCOMMAND{\theta}{#1}}
\newcommand{\DeltaIt}[1]{\ITCOMMAND{\Delta}{#1}}
\newcommand{\DelIt}[1]{\DeltaIt{#1}}
\newcommand{\wnoiseit}[1]{\ITCOMMAND{W}{#1}}
\newcommand{\epsnoiseit}[1]{\ITCOMMAND{E}{#1}}
\newcommand{\Hop}{\ensuremath{\mathcal{H}}}
\newcommand{\HopIter}[1]{\ensuremath{\ITCOMMAND}{\Hop}{#1}}
\newcommand{\TopIter}[1]{\ensuremath{\ITCOMMAND}{\Top}{#1}}
\newcommand{\titer}{\ensuremath{k}}
\newcommand{\state}{\ensuremath{x}}
\newcommand{\action}{\ensuremath{u}}
\newcommand{\reward}{\ensuremath{r}}
\newcommand{\TfunPlain}{\ensuremath{\mathbb{P}}}
\newcommand{\Trans}[3]{\ensuremath{\TfunPlain_{#2}(#3 \mid #1)}}
\newcommand{\Tfun}{\Trans}
\newcommand{\discount}{\ensuremath{\gamma}}
\newcommand{\Bellman}{\ensuremath{\myoperator{T}}}
\newcommand{\EmpBellman}{\ensuremath{\widehat{\Bellman}}}
\newcommand{\EmpBellmanIter}[1]{\ITCOMMAND{\EmpBellman}{#1}}
\newcommand{\pol}{\ensuremath{\pi}}
\newcommand{\ActionSpace}{\ensuremath{\mathcal{U}}}
\newcommand{\StateSpace}{\ensuremath{\mathcal{X}}}
\newcommand{\iter}{\ensuremath{\ell}}
\newcommand{\stepit}[1]{\ensuremath{\stepcon_{#1}}}
\newcommand{\stepcon}{\ensuremath{\lambda}}
\newcommand{\myint}{\ensuremath{\mbox{int}}}
\newcommand{\bcar}{\begin{itemize}}
\newcommand{\ecar}{\end{itemize}}
\newcommand{\LinSpace}{\ensuremath{\mathbb{L}}}
\newcommand{\thetastar}{\ensuremath{{\theta^*}}}
\newcommand{\opnorm}[1]{\ensuremath{\matsnorm{#1}{2}}}
\newcommand{\vsmall}{\vspace*{.1in}}
\definecolor{MyGray}{rgb}{0.9,0.9,0.9}
\makeatletter\newenvironment{graybox}{ 
\begin{lrbox}{\@tempboxa}
\begin{minipage}{0.985\columnwidth}}{\end{minipage}
\end{lrbox}%
\colorbox{MyGray}{\usebox{\@tempboxa}} }
\newcommand{\Zback}[1]{\ensuremath{Z^{\backslash i}}}
\newcommand{\xsamstack}[1]{\ensuremath{x_1^\numobs}}
\newcommand{\Xsamstack}[1]{\ensuremath{X_1^\numobs}}
\newcommand{\widgraph}[2]{\includegraphics[keepaspectratio,width=#1]{#2}}
\newcommand{\Term}{\ensuremath{T}}
\newcommand{\fancysoln}[1]{
\ifthenelse{\equal{\doctype}{WITHSOLS}}
{
\begin{soln}
#1
\end{soln}
}
{
}
}
\newcommand{\goodendex}{\hfill $\clubsuit$}
\long\def\comment#1{}
\def\@cite#1#2{[\if@tempswa #2 \fi #1]}
\newcommand{\defn}{\vcentcolon=}
\newcommand{\var}{\ensuremath{\operatorname{var}}}
\newcommand{\real}{\ensuremath{\mathbb{R}}}
\newcommand{\numobs}{\ensuremath{n}}
\newlength{\widebarargwidth}
\newlength{\widebarargheight}
\newlength{\widebarargdepth}
\newcommand{\mns}{\mkern-1mu}  
\newcommand{\matsnorm}[2]{|\mns|\mns| #1 |\mns|\mns|_{{#2}}}
\newcommand{\Exs}{\ensuremath{\mathbb{E}}}
\long\def\@makecaption#1#2{
        \vskip 0.8ex
        \setbox\@tempboxa\hbox{\small {\bf #1:} #2}
        \parindent 1.5em  
        \dimen0=\hsize
        \advance\dimen0 by -3em
        \ifdim \wd\@tempboxa >\dimen0
                \hbox to \hsize{
                        \parindent 0em
                        \hfil 
                        \parbox{\dimen0}{\def\baselinestretch{0.96}\small
                                {\bf #1.} #2
                                } 
                        \hfil}
        \else \hbox to \hsize{\hfil \box\@tempboxa \hfil}
        \fi
        }
\newcommand{\spnorm}[1]{\ensuremath{\|#1\|_{\mbox{\tiny{sp}}}}}
\newcommand{\Top}{\ensuremath{\mathcal{G}}}
\newcommand{\idvec}{\ensuremath{\mathbf{e}}}
\newcommand{\Cone}{\ensuremath{\mathbb{K}}}
\newcommand{\coneleq}{\ensuremath{\preceq}}
\newcommand{\idnorm}[1]{\ensuremath{\|#1\|_{\idvec}}}
\renewcommand{\LinSpace}{\ensuremath{\mathbb{V}}}
\newcommand{\unicon}{\ensuremath{c}}
\newcommand{\uniom}{\ensuremath{\unicon_\omega}}
\newcommand{\TermB}{\ensuremath{\Term'}}
\newcommand{\unicontwo}{\ensuremath{c'}}
\newcommand{\rmax}{\ensuremath{r_{\mbox{\tiny{max}}}}}
\newcommand{\hanabou}{\ensuremath{B}}
\newcommand{\slam}{\ensuremath{s}}
\newcommand{\nepsnoiseit}[1]{\ensuremath{\varepsilon_{#1}}}
\newcommand{\Vio}{\ensuremath{V}}
\newcommand{\ConeOrth}{\ensuremath{\Cone_{\mbox{\tiny{orth}}}}}
\newcommand{\ConePSD}{\ensuremath{\Cone_{\mbox{\tiny{PSD}}}}}
\newcommand{\Qpol}{\ensuremath{\theta^\pol}}
\newcommand{\TLR}{\ensuremath{T_{\mbox{\tiny{LinRes}}}}}
\newcommand{\IdMat}{\ensuremath{\mathbf{I}}}
\newcommand{\spannorm}[1]{\ensuremath{\|#1\|_{\mbox{\tiny{span}}}}}
\newcommand{\MDP}{\ensuremath{\mathscr{M}}}
\newcommand{\betalin}{\ensuremath{\widehat{\beta}_{\mbox{\tiny{lin}}}}}
\newcommand{\betapoly}{\ensuremath{\widehat{\beta}_{\mbox{\tiny{poly}}}}}
\newcommand{\contract}{\ensuremath{\nu}}
\newcommand{\contiter}[1]{\ensuremath{\contract_{#1}}}
\begin{document}

\begin{center}
  {\bf{\LARGE{Stochastic approximation with cone-contractive operators:
        Sharper $\ell_\infty$-bounds for $Q$-learning}}}

  \vspace*{0.5in}
  
  \begin{tabular}{c}
    Martin J. Wainwright  (GD) \\
    Departments of Statistics and EECS \\
    UC Berkeley \\
    Voleon Group, Berkeley, CA \\
    \texttt{wainwrig@berkeley.edu}
  \end{tabular}
  
  \vsmall

  \begin{abstract}
    Motivated by the study of $Q$-learning algorithms in reinforcement
    learning, we study a class of stochastic approximation procedures
    based on operators that satisfy monotonicity and
    quasi-contractivity conditions with respect to an underlying cone.
    We prove a general sandwich relation on the iterate error at each
    time, and use it to derive non-asymptotic bounds on the error in
    terms of a cone-induced gauge norm.  These results are derived
    within a deterministic framework, requiring no assumptions on the
    noise. We illustrate these general bounds in application to
    synchronous $Q$-learning for discounted Markov decision processes
    with discrete state-action spaces, in particular by deriving
    non-asymptotic bounds on the $\ell_\infty$-norm for a range of
    stepsizes.  These results are the sharpest known to date, and we
    show via simulation that the dependence of our bounds cannot be
    improved in any uniform way.  These results show that relative to
    model-based $Q$-iteration, the $\ell_\infty$-based sample
    complexity of $Q$-learning is suboptimal in terms of the discount
    factor $\gamma$.
  \end{abstract}

\end{center}


\section{Introduction}

Stochastic approximation (SA) algorithms are widely used in many
areas, including stochastic control, communications, machine learning,
statistical signal processing and reinforcement learning, among
others.  There is now a very rich literature on SA algorithms, their
applications and the associated theory (e.g., see the
books~\cite{Benveniste90, Kushner97, Bor08} and references therein).
One set of fundamental questions concerns the convergence of SA
algorithms; there are various general techniques for establishing
convergence, including the ODE method, dynamical system, and
Lyapunov-based methods, among others.  Much of the classical theory in
stochastic approximation is asymptotic in nature, whereas in more
recent work, particularly in the special case of stochastic
optimization, attention has been shifted to non-asymptotic
results~\cite{NemJudLanSha09, BacMou11}.

The goal of this paper is to develop some non-asymptotic bounds for a
certain class of stochastic approximation procedures.  The motivating
impetus for this work was to gain a deeper insight into the classical
$Q$-learning algorithm~\cite{WatDay92} from Markov decision processes
and reinforcement learning~\cite{Puterman05, SutBar18, Bertsekas_dyn1,
  BerTsi96, Sze09}.  It is a stochastic approximation algorithm for
solving a fixed point equation involving the Bellman operator. In the
discounted setting, this operator is contractive with respect to a sup
norm, and also monotonic in the elementwise ordering.  We show that
these conditions can be viewed as special cases of a more general
structure on the operators used in stochastic approximation for
solving fixed point equations.  In particular, we introduce
monotonicity and quasi-contractivity conditions that are defined with
respect to the partial order and gauge norms induced by an underlying
cone.  In the case of sup norm contractions, this underlying cone is
the orthant cone, but other cones also arise naturally in
applications.  For instance, for SA procedures that operate in the
space of symmetric matrices, the cone of positive semidefinite
matrices induces the spectral order, as well as various forms of
spectral norms.  For a sequence of operators satisfying these cone
monotonicity and quasi-contractivity conditions, we prove a general
result (Theorem~\ref{ThmConeBound}) that sandwiches the error at each
iteration in terms of the partial order induced by the cone.  By
considering concrete choices of stepsize---such as linearly or
polynomial decaying ones---we derive corollaries that yield
non-asymptotic bounds on the error.

We specialize this general theory to the synchronous form of
$Q$-learning in discounted Markov decision processes, and use it to
derive non-asymptotic bounds on the $\ell_\infty$-error of
$Q$-learning, for both polynomial stepsizes and a linearly rescaled
stepsize.  Notably, these results are the sharpest known to date, and
depend on the structure of the optimal $Q$-function.  We show via
simulation studies that our bounds are unimprovable in general.  In
particular, we exhibit a ``hard'' problem for which our theory
predicts that the number of iterations required to obtain an
$\epsilon$-accurate solution in $\ell_\infty$-norm should scale as
$\frac{1}{(1-\discount)^4} \frac{1}{\epsilon^2}$, and show that this
prediction is empirically sharp.  In the the worst case setting, our
bounds lead to $\frac{1}{(1-\discount)^5}$ scaling, a scaling that
matches known bounds on synchronous $Q$-learning from previous
work~\cite{EveMan03}; however, we are not aware of a problem for which
this worst-case bound is actually sharp.  For context, we note that
the speedy-Q-learning method, an extension of ordinary $Q$-learning,
is known to have iteration complexity scaling as $\frac{1}{(1 -
  \discount)^4} \frac{1}{\epsilon^2}$.  Moreover, Azar et
al.~\cite{AzaMunKap13} show that model-based $Q$-iteration exhibits a
$\frac{1}{(1-\discount)^3} \frac{1}{\epsilon^2}$ scaling, and moreover
that this is the best possible for any method in a minimax sense.
Consequently, a corollary of our results is to reveal a gap between
the performance of standard synchronous $Q$-learning and an optimal
(model-based) procedure.

The remainder of this paper is organized as follows.  In
Section~\ref{SecGeneral}, we introduce the class of stochastic
approximation algorithms analyzed in this paper, including some
required background on cones and induced gauge norms.  We then state
our main result (Theorem~\ref{ThmConeBound}), as well some of its
corollaries for particular stepsize choices
(Corollaries~\ref{CorDetClaim} and~\ref{CorDetClaimPoly}).  In
Section~\ref{SecQlearn}, we turn to the analysis of $Q$-learning.
After introducing the necessary background in
Section~\ref{SecQlearnBack}, we then devote
Section~\ref{SecQlearnResults} to statement of our two main results on
$Q$-learning, namely $\ell_\infty$-norm bounds for a linear rescaled
stepsize (Corollary~\ref{CorQlearnLinear}) and for polynomially
decaying stepsizes (Corollary~\ref{CorQlearnPoly}).  In
Section~\ref{SecQlearnComparison}, we discuss past work on
$Q$-learning and compare our guarantees to the best previously known
non-asymptotic results.  In Section~\ref{SecHard}, we describe and
report the results of a simulation study that provides empirical
evidence for the sharpness of our bounds. We conclude with a
discussion in Section~\ref{SecDiscussion}, with more technical aspects
of our proofs deferred to the appendices.


\section{A general convergence result}
\label{SecGeneral}
In this section, we set up the stochastic approximation algorithms of
interest.  Doing so requires some background on cones, monotonic
operators on cones, and gauge norms induced by order intervals, which
we provide in Section~\ref{SecBackground}.  In
Section~\ref{SecSandwich}, we state a general result
(Theorem~\ref{ThmConeBound}) that sandwiches the iterate error using
the partial order induced by the cone.  This result holds for
arbitrary stepsizes in the interval $(0,1)$; we follow up by using
this general result to derive specific bounds that apply to stepsize
choices commonly used in practice (cf. Corollaries~\ref{CorDetClaim}
and~\ref{CorDetClaimPoly}).


\subsection{Background and problem set-up}
\label{SecBackground}

Consider a topological vector space $\LinSpace$, and an operator
$\Hop$ that maps $\LinSpace$ to itself.  Our goal is to compute a
fixed point of $\Hop$---that is, an element $\thetastar \in \LinSpace$
such that $\Hop(\thetastar) = \thetastar$---assuming that such an
element exists and is unique. In various applications, we are not able
to evaluate $\Hop$ exactly, but instead are given access to a sequence
of auxiliary operators $\{\HopIter{\titer} \}_{\titer \geq 1}$, and
permitted to compute the quantity $y_k(\theta) =
\HopIter{\titer}(\theta) + \epsnoiseit{\titer}$ for any $\theta
\in \LinSpace$.  Here $\epsnoiseit{\titer}$ denotes an error term,
allowed to be arbitrary in the analysis of this section.  In the
simplest case, we have $\HopIter{\titer} \equiv \Hop$ for all
$\titer$, but the additional generality afforded by the setup here
turns out to be useful.

Given an observation model of this type, we consider algorithms that
generate a sequence $\{\thetait{\titer} \}_{\titer \geq 1}$ according
to the recursion
\begin{align}
\label{EqnRecursion}
\thetait{\titer+1} & = (1 - \stepit{\titer}) \thetait{\titer} +
\stepit{\titer} \left \{ \HopIter{\titer}(\thetait{\titer}) +
\epsnoiseit{\titer} \right \}.
\end{align}
The stepsize parameters $\stepit{\titer}$ are assumed to belong to the
interval $(0,1)$, and should be understood as design parameters.  Our
primary goals are to specify conditions on the \emph{auxiliary
  operators} $\{ \HopIter{\titer} \}_{\titer \geq 1}$, \emph{noise
  sequence} $\{\epsnoiseit{\titer} \}_{\titer \geq 1}$, and
\emph{stepsize sequence} $\{\stepit{\titer} \}_{\titer \geq 1}$ under
which the sequence $\{\thetait{\titer} \}_{\titer \geq 1}$ converges
to $\thetastar$.  Moreover, we seek to develop tools for proving
non-asymptotic bounds on the error---i.e., guarantees that hold for
finite iterations, as opposed to in the limit as $\titer$ increases to
infinity.

Of course, convergence guarantees are not possible without imposing
assumptions on the auxiliary operators.  In this paper, motivated by
the analysis of $Q$-learning and related algorithms in reinforcement
learning, we assume that they satisfy certain properties that depend
on a cone $\Cone$ contained in $\LinSpace$.  Let us first introduce
some relevant background on cones, order intervals and induced gauge
norms.  Any cone induces a partial order on $\LinSpace$ via the
relation
\begin{align}
  \label{EqnConePartial}
\theta \coneleq \theta' \quad \iff (\theta' - \theta) \in \Cone.
\end{align}
Cones that have non-empty interiors and are topologically
normal~\cite{AliTou07,Kad11} can also be used to induce a certain
class of gauge norms as follows. For a given element $\idvec \in
\myint(\Cone)$, the associated order interval is the set
\begin{subequations}
\begin{align}
[-\idvec, \idvec] \defn \big \{ \theta \in \LinSpace \mid -\idvec
\preceq \theta \preceq \idvec \big \}
\end{align}
and it defines the Minkowski (gauge) norm given by
\begin{align}
\idnorm{\theta} & = \inf \big \{ s > 0 \mid \theta/s \in [-\idvec,
  \idvec] \big \}.
\end{align}
\end{subequations}
Let us consider some concrete examples to illustrate.

\begin{example}[Orthant cone and $\ell_\infty$-norms]
Suppose that $\LinSpace$ is the usual Euclidean space $\real^d$, and
consider the orthant cone $\ConeOrth \defn \{ \theta \in \real^d \mid
\theta_j \geq 0 \quad \mbox{for all $j \in [d]$} \}$, where $[d] \defn
\{1, 2, \ldots, d \}$.  It induces the usual elementwise
ordering---viz. $\theta' \preceq \theta$ if and only if $\theta_j \leq
\theta'_j$ for all $j \in [d]$.  Setting $\idvec$ to be the all-ones
vector, we find that
\begin{align*}
\idnorm{\theta} & = \inf \left \{ s > 0 \mid -1 \leq \theta_j/s \leq 1
\quad \mbox{for all $j \in [d]$} \right \} \; = \underbrace{\max_{j
    \in [d]} |\theta_j|}_{ \|\theta\|_\infty}.
\end{align*}
Thus, this choice of $\idvec$ induces the usual $\ell_\infty$-norm on
vectors.  Setting $\idvec$ to some other vector contained in the
interior of the orthant cone yields a weighted $\ell_\infty$-norm.
\hfill \goodendex
\end{example}

\begin{example}[Symmetric matrices  and spectral norm]
Now suppose that $\LinSpace$ is the space of $d$-dimensional symmetric
matrices $\LinSpace = \left \{ M \in \real^{d \times d} \mid M = M^T
\right \}$.  Letting $\{\gamma_j(M) \}_{j=1}^d$ denote the eigenvalues
of a matrix $M \in \LinSpace$, consider the cone of positive
semidefinite matrices
\begin{align*}
\ConePSD = \left \{ M \in \Theta \mid \gamma_j(M) \geq 0 \quad
\mbox{for all $j \in [d]$} \right \}.
\end{align*}
This cone induces the spectral ordering $M \preceq M'$ if and only if
all the eigenvalues of $M' - M$ are non-negative. Setting $\idvec$ to be
the identity matrix $\IdMat$, we have
\begin{align*}
\idnorm{M} & = \inf \left \{ s > 0 \mid -1 \leq \gamma_j(M)/s \leq 1
\quad \mbox{for all $j \in [d]$} \right \} \; = \underbrace{\max_{j
    \in [d]} |\gamma_j(M)|}_{ \opnorm{M}},
\end{align*}
so that the induced gauge norm is the spectral norm on symmetric
matrices.  \hfill \goodendex
\end{example}

In this paper, we assume that the operators $\HopIter{\titer}$ in the
recursion~\eqref{EqnRecursion} satisfy two properties:
cone-monotonicity and cone-quasi-contractivity.  More precisely, we
assume that for each $\titer = 1, 2, \ldots$, the operator
$\HopIter{\titer}$ is \emph{monotonic with respect to the cone},
meaning that
\begin{subequations}
\begin{align}
  \label{EqnMonotonicity}
\HopIter{\titer}(\theta) \coneleq \HopIter{\titer}(\theta') \qquad
\mbox{whenever $\theta \coneleq \theta'$.}
\end{align}
Moreover, we assume that for some element $\idvec \in \myint(\Cone)$,
it is \emph{cone-quasi-contractive} meaning that there is some
$\contiter{\titer} \in (0, 1)$ and some $\thetastar \in \LinSpace$
such that
\begin{align}
  \label{EqnContractivity}
\idnorm{\HopIter{\titer}(\theta) - \HopIter{\titer}(\thetastar)} &
\leq \contiter{\titer} \idnorm{\theta - \thetastar} \qquad \mbox{for
  all $\theta \in \LinSpace$.}
\end{align}
\end{subequations}
Here the terminology ``quasi'' denotes the fact that the
relation~\eqref{EqnContractivity} only need hold for a single
$\thetastar$, as opposed to in a uniform sense. Note that it is not
necessary that $\thetastar$ be a fixed point of each
$\HopIter{\titer}$.


\subsection{A sandwich result and its corollaries}
\label{SecSandwich}

With this set-up, we now turn to the analysis of the sequence
$\{\thetait{\titer} \}_{\titer \geq 1}$ generated by a recursion of
the form~\eqref{EqnRecursion}.  We first state a general ``sandwich''
result, which provides both lower and upper bounds on the error
$\thetait{\titer} - \thetastar$ in terms of the partial order induced
by the cone.  This result holds for any sequence of stepsizes
contained in the interval $(0,1)$.  By specializing this general
theorem to particular stepsize choices that are common in stochastic
approximation, we obtain non-asymptotic upper bounds on the error, as
measured in the cone-induced norm $\idnorm{\cdot}$.

Our results depend on a form of \emph{effective noise}, defined as
follows
\begin{align}
  \label{EqnEffectiveNoise}
\wnoiseit{\titer} & \defn \HopIter{\titer}(\thetastar) - \thetastar +
\epsnoiseit{\titer}.
\end{align}
Note that the effective noise at iteration $\titer$ is the sum of the
``defect'' in the operator $\HopIter{\titer}$---meaning its failure to
preserve the target $\thetastar$ as a fixed point---and the original
error term $\epsnoiseit{\titer}$ introduced in our set-up.

Our bounds involve the sequence of elements in $\LinSpace$ defined via
the recursion
\begin{subequations}
\begin{align}
  \label{EqnDefnPathit}
  \pathit{\titer} & \defn (1 - \stepit{\titer-1}) \pathit{\titer-1} +
  \stepit{\titer-1} \wnoiseit{\titer-1}, \qquad \mbox{with
    initialization $\pathit{1} = \zerovec$,}
\end{align}
where $\zerovec$ denotes the zero element in $\LinSpace$.  It also
involves the sequences of non-negative scalars
\begin{align}
\label{EqnDefnDetit}
\detit{\titer} & \defn \left(1 - (1 - \contiter{\titer-1})
\stepit{\titer-1} \right) \detit{\titer-1} \qquad \mbox{with
  initialization $\detit{1} = \idnorm{\thetait{1} - \thetastar}$, and}
\\
\label{EqnDefnAvarit}
\avarit{\titer} & \defn \left(1 - (1 - \contiter{\titer-1})
\stepit{\titer-1} \right) \avarit{\titer-1} + \discount
\stepit{\titer} \idnorm{\pathit{\titer-1}}, \quad \mbox{with
  initialization $\avarit{1} = 0$.}
\end{align}
\end{subequations}

\begin{theorem}
  \label{ThmConeBound}
  Consider a sequence of operators $\{ \HopIter{\titer} \}_{\titer
    \geq 1}$ that are monotonic~\eqref{EqnMonotonicity} and
  \mbox{$\{\contiter{\titer} \}$-quasi}-contractive~\eqref{EqnContractivity}
  with respect to a cone $\Cone$ with gauge norm
  $\idnorm{\cdot}$. Then for any sequence of stepsizes
  $\{\stepit{\titer} \}_{\titer \geq 1}$ in the interval $(0,1)$, the
  iterates $\{\thetait{\titer} \}_{\titer \geq 1}$ generated by the
  recursion~\eqref{EqnRecursion} satisfy the sandwich relation
  \begin{align}
    \label{EqnConeBound}
    -\detit{\titer} \idvec - \avarit{\titer} \idvec + \pathit{\titer}
    \; \preceq \; \thetait{\titer} - \thetastar \; \preceq \;
    \detit{\titer} \idvec + \avarit{\titer} \idvec + \pathit{\titer},
  \end{align}
where $\preceq$ denotes the partial ordering induced by the cone.
\end{theorem}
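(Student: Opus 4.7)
The natural plan is to establish both the upper and lower inequalities simultaneously by induction on $\titer$, leveraging the two defining properties of $\HopIter{\titer}$ in tandem: monotonicity~\eqref{EqnMonotonicity} lets one replace the argument of $\HopIter{\titer}$ by a cone-dominated surrogate, while quasi-contractivity~\eqref{EqnContractivity} then converts the scalar gauge-norm excess of that surrogate back into a cone-ordered bound. The sequences $\detit{\titer}$, $\avarit{\titer}$, and $\pathit{\titer}$ are designed precisely so that this induction closes algebraically.

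\emph{Base case} ($\titer = 1$): By construction $\pathit{1} = \zerovec$, $\avarit{1} = 0$, and $\detit{1} = \idnorm{\thetait{1} - \thetastar}$. The defining property of the Minkowski gauge is that $-\idnorm{\delta}\,\idvec \preceq \delta \preceq \idnorm{\delta}\,\idvec$ for every $\delta \in \LinSpace$; applying this with $\delta = \thetait{1} - \thetastar$ yields the claim~\eqref{EqnConeBound} at $\titer = 1$.

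\emph{Inductive step}: Subtracting $\thetastar$ from both sides of~\eqref{EqnRecursion} and inserting $\pm\,\HopIter{\titer}(\thetastar)$ gives
\begin{align*}
\thetait{\titer+1} - \thetastar = (1-\stepit{\titer})(\thetait{\titer} - \thetastar) + \stepit{\titer}\bigl[\HopIter{\titer}(\thetait{\titer}) - \HopIter{\titer}(\thetastar)\bigr] + \stepit{\titer}\,\wnoiseit{\titer}.
\end{align*}
For the upper bound, the inductive hypothesis provides $\thetait{\titer} \preceq \thetastar + (\detit{\titer} + \avarit{\titer})\,\idvec + \pathit{\titer}$. Cone-monotonicity of $\HopIter{\titer}$ preserves this inequality under $\HopIter{\titer}$, and then quasi-contractivity together with the compatibility relation $\delta \preceq \idnorm{\delta}\,\idvec$ and the triangle inequality for $\idnorm{\cdot}$ yields
\begin{align*}
\HopIter{\titer}(\thetait{\titer}) - \HopIter{\titer}(\thetastar) \;\preceq\; \contiter{\titer}\,\idnorm{(\detit{\titer}+\avarit{\titer})\,\idvec + \pathit{\titer}}\,\idvec \;\leq\; \contiter{\titer}\bigl[\detit{\titer}+\avarit{\titer} + \idnorm{\pathit{\titer}}\bigr]\idvec.
\end{align*}
Substituting back, the coefficient of $\idvec$ in the resulting upper bound on $\thetait{\titer+1} - \thetastar$ equals $[1 - (1-\contiter{\titer})\stepit{\titer}](\detit{\titer} + \avarit{\titer}) + \contiter{\titer}\stepit{\titer}\,\idnorm{\pathit{\titer}}$, which by the defining recursions~\eqref{EqnDefnDetit}--\eqref{EqnDefnAvarit} collapses to $\detit{\titer+1} + \avarit{\titer+1}$; the leftover term $(1-\stepit{\titer})\pathit{\titer} + \stepit{\titer}\wnoiseit{\titer}$ is exactly $\pathit{\titer+1}$ by~\eqref{EqnDefnPathit}. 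The lower bound follows by the symmetric argument: apply the inductive hypothesis $\thetait{\titer} \succeq \thetastar - (\detit{\titer}+\avarit{\titer})\idvec + \pathit{\titer}$, invoke cone-monotonicity in the reverse direction, and use $-\idnorm{\delta}\,\idvec \preceq \delta$.

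\emph{Main obstacle}: Neither operator property alone is strong enough. Monotonicity cannot translate a two-sided sandwich into a cone-ordered bound on $\HopIter{\titer}(\thetait{\titer}) - \HopIter{\titer}(\thetastar)$, because the inductive hypothesis places $\thetait{\titer}$ inside an order interval rather than on one side of $\thetastar$; meanwhile quasi-contractivity alone controls only the gauge norm of that difference. The decisive maneuver is to first push $\thetait{\titer}$ to its cone-upper (resp.\ lower) envelope via monotonicity, producing a deterministic surrogate whose separation from $\thetastar$ is a concrete cone element, and only then to invoke quasi-contractivity together with the norm/order compatibility $\delta \preceq \idnorm{\delta}\idvec$ to scalarize. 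Once this point is recognized, the recursions for $\detit{\titer}$ and $\avarit{\titer}$ appear as the natural bookkeeping for the contraction-driven decay and the cross term involving $\idnorm{\pathit{\titer}}$, and the remaining work is purely algebraic verification.
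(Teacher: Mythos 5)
Your proposal is correct and follows essentially the same route as the paper's proof: induction on $\titer$, with the base case from the order/gauge compatibility $-\idnorm{\delta}\idvec \preceq \delta \preceq \idnorm{\delta}\idvec$, and the inductive step obtained by first applying cone-monotonicity to the envelope from the inductive hypothesis and then scalarizing via quasi-contractivity and the triangle inequality so that the recursions for $\detit{\titer}$, $\avarit{\titer}$, and $\pathit{\titer}$ close the argument. (Your use of $\contiter{\titer}$ and $\stepit{\titer}$ in the $\avarit{\titer+1}$ update is the internally consistent reading of the paper's definition~\eqref{EqnDefnAvarit}, which contains an apparent typo.)
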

\noindent See Appendix~\ref{AppThmConeBound} for the proof.

Theorem~\ref{ThmConeBound} is a general result that applies to any
choice of stepsizes that belong to the unit interval $(0,1)$.  By
specializing the stepsize choice, we can use the sandwich
relation~\eqref{EqnConeBound} to obtain concrete bounds on the error
$\idnorm{\thetait{\titer+1} - \thetastar}$.  In doing so, we
specialize to the case $\contiter{\titer} = \contract$, so that all
the operators share the same quasi-contractivity coefficient
$\contract \in (0,1)$.

We begin by considering a sequence of stepsizes in the interval
$(0,1)$ that satisfy the bound
\begin{align}
  \label{EqnStepBound}
\big(1 - (1 - \contract) \stepit{\titer} \big) \leq
\frac{\stepit{\titer}}{\stepit{\titer-1}}.
\end{align}
Note that the usual linear stepsize $\stepit{\titer} = 1/\titer$ does
\emph{not} satisfy this bound for $\contract \in (0,1)$.  Examples of
stepsizes that do satisfy this bound are the rescaled linear stepsize
$\stepit{\titer} = \frac{1}{(1-\contract) \titer}$, valid once $\titer
\geq \frac{1}{1 - \contract}$, as well as the shifted version of
rescaled linear stepsize $\stepit{\titer} = \frac{1}{1 + (1-\contract)
  \titer}$, valid for all iterations $\titer \geq 1$.

\begin{corollary}[Bounds for linear stepsizes]
\label{CorDetClaim}
Under the assumptions of Theorem~\ref{ThmConeBound}, for any sequence
of stepsizes in the interval $(0,1)$ satisfying the
bound~\eqref{EqnStepBound}, we have
\begin{align}
  \label{EqnDetClaim}
\idnorm{\thetait{\titer+1} - \thetastar} & \leq \stepit{\titer} \left
\{ \frac{\idnorm{\thetait{1} - \thetastar}}{\stepit{1}} + \contract
\sum_{\iter=1}^\titer \idnorm{\pathit{\iter} } \right \} +
\idnorm{\pathit{\titer+1}},
\end{align}
for all iterations $\titer = 1, 2, \ldots$.
\end{corollary}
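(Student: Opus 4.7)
The plan is to start from the sandwich relation~\eqref{EqnConeBound} of Theorem~\ref{ThmConeBound}. Applying the gauge norm $\idnorm{\cdot}$, using the triangle inequality, and observing that $\idnorm{\idvec} = 1$ (since $[-\idvec,\idvec]$ is the unit ball of $\idnorm{\cdot}$), one immediately obtains the scalar bound
\[
\idnorm{\thetait{k+1} - \thetastar} \;\leq\; \detit{k+1} + \avarit{k+1} + \idnorm{\pathit{k+1}}.
\]
Thus the proof reduces to bounding the two deterministic sequences $\detit{k+1}$ and $\avarit{k+1}$ under the constant-contraction assumption $\contiter{\titer}\equiv\contract$ and the stepsize condition~\eqref{EqnStepBound}.

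First I would unroll the linear recursion~\eqref{EqnDefnDetit} into the product form $\detit{k+1} = \detit{1}\prod_{\iter=1}^{k}\bigl(1-(1-\contract)\stepit{\iter}\bigr)$. Applying~\eqref{EqnStepBound} to each factor with $\iter \geq 2$, the product telescopes as
\[
\prod_{\iter=2}^{k}\bigl(1-(1-\contract)\stepit{\iter}\bigr)\;\leq\;\prod_{\iter=2}^{k}\frac{\stepit{\iter}}{\stepit{\iter-1}}\;=\;\frac{\stepit{k}}{\stepit{1}}.
\]
Combining this with the trivial bound $1-(1-\contract)\stepit{1}\leq 1$ yields $\detit{k+1} \leq \stepit{k}\,\idnorm{\thetait{1}-\thetastar}/\stepit{1}$, which accounts for the first term inside the braces of~\eqref{EqnDetClaim}.

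Next I would unroll the affine recursion~\eqref{EqnDefnAvarit} starting from $\avarit{1}=0$ to obtain the explicit representation
\[
\avarit{k+1} \;=\; \sum_{\iter=1}^{k}\Bigl[\prod_{j=\iter+1}^{k}\bigl(1-(1-\contract)\stepit{j}\bigr)\Bigr]\,\contract\,\stepit{\iter+1}\,\idnorm{\pathit{\iter}}.
\]
The same telescoping argument bounds each inner product by $\stepit{k}/\stepit{\iter}$; combined with the monotonic non-increase of the admissible stepsizes (so that $\stepit{\iter+1}/\stepit{\iter} \leq 1$ for the canonical families singled out after~\eqref{EqnStepBound}), this produces $\avarit{k+1} \leq \contract\,\stepit{k}\sum_{\iter=1}^{k}\idnorm{\pathit{\iter}}$, which is precisely the second bracketed term in~\eqref{EqnDetClaim}.

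Adding the two scalar bounds and invoking the inequality derived from the sandwich relation completes the proof. The one subtle step is the telescoping itself: the stepsize condition~\eqref{EqnStepBound} has been engineered so that $1-(1-\contract)\stepit{k}$ is comparable to $\stepit{k}/\stepit{k-1}$, and for the canonical choices $\stepit{k}=1/((1-\contract)k)$ and $\stepit{k}=1/(1+(1-\contract)k)$ highlighted in the text it holds with equality, so the telescoping introduces no slack and the bound~\eqref{EqnDetClaim} is tight up to the relaxation $\stepit{\iter+1}/\stepit{\iter} \leq 1$.
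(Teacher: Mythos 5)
Your proof is correct and follows essentially the same route as the paper: unroll the two scalar recursions, telescope the products via the stepsize bound~\eqref{EqnStepBound}, and combine with the sandwich relation of Theorem~\ref{ThmConeBound}. The only (harmless) difference is that your unrolled form of $\avarit{k+1}$ carries the increment $\stepit{\iter+1}\idnorm{\pathit{\iter}}$ rather than $\stepit{\iter}\idnorm{\pathit{\iter}}$ as in the paper's proof, which forces you to additionally invoke monotonicity of the stepsizes; with the paper's indexing the factor $\stepit{\iter}$ cancels exactly against the telescoped product and no such assumption is needed.
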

\begin{proof}
  Define the error $\DeltaIt{\titer} = \thetait{\titer} - \thetastar$
  at iteration $\titer$.  Using the definitions~\eqref{EqnDefnDetit}
  and~\eqref{EqnDefnAvarit} of $\detit{\titer}$ and $\avarit{\titer}$
  respectively, an inductive argument yields
\begin{subequations}   
  \begin{align}
    \label{EqnDetGenRecur}
\detit{\titer+1} & = \prod_{\iter = 1}^\titer \left(1 - (1- \contract)
\stepit{\iter} \right) \idnorm{\DeltaIt{1}}, \\
\label{EqnAvarGenRecur}
\avarit{\titer + 1} & = \contract \stepit{\titer}
\idnorm{\pathit{\titer}} + \contract \sum_{\iter=1}^{\titer-1} \left
\{ \prod_{j = \iter + 1}^\titer \left(1 - (1-\contract) \stepit{j}
\right) \right \} \stepit{\iter} \idnorm{\pathit{\iter} }.
\end{align}
\end{subequations}
By applying the stepsize bound~\eqref{EqnStepBound} repeatedly to the
recursion~\eqref{EqnDetGenRecur}, we find that $\detit{\titer+1} \leq
\frac{\stepit{\titer}}{\stepit{1}} \idnorm{\DeltaIt{1}}$.  Applying
this same identity to the recursion~\eqref{EqnAvarGenRecur} yields the
bound $\avarit{\titer+1} \leq \contract \stepit{\titer}
\sum_{\iter=1}^\titer \idnorm{\pathit{\iter}}$.  Combining these two
inequalities, along with the additional $\pathit{\titer + 1}$ term
from the bound~\eqref{EqnConeBound} in Theorem~\ref{ThmConeBound},
yields the claim~\eqref{EqnDetClaim}.
\end{proof}

It is worth pointing out why the linear stepsize $\stepit{\titer} =
\titer^{-1}$ is excluded from our theory.  If we adopt this stepsize
choice and substitute into the recursion~\eqref{EqnDetGenRecur}, then
we find that
\begin{align*}
\frac{\detit{\titer+1}}{\idnorm{\thetait{1} - \thetastar}} & =
\prod_{\iter = 1}^\titer \left(1 - \frac{(1- \discount)}{\iter}
\right) \approx \exp \left( - (1-\discount) \sum_{\iter=1}^\titer
\iter^{-1} \right) \approx \left(\frac{1}{\titer}
\right)^{1-\discount}.
\end{align*}
This behavior makes clear that an unrescaled linear stepsize
$\stepit{\titer} = \titer^{-1}$ will lead to bounds with exponential
dependence on $\frac{1}{1-\discount}$.  It should be noted that this
kind of sensitivity to the choices of constants is well-documented
when using linear stepsizes for stochastic optimization; e.g., see
Section 2.1 of Nemirovski et al.~\cite{NemJudLanSha09} for some
examples showing slow rates when the strong convexity constant is
mis-estimated.  As we discuss at more length in
Section~\ref{SecQlearnComparison}, this type of exponential scaling
has also been documented in past work on
$Q$-learning~\cite{Sze97,EveMan03}.

\begin{corollary}[Bounds for polynomial stepsizes]
\label{CorDetClaimPoly}
Under the assumptions of Theorem~\ref{ThmConeBound}, consider the
sequence of stepsizes $1/\titer^{\omega}$ for some $\omega \in (0,1)$.
Then \mbox{for all iterations $\titer = 1, 2, \ldots$,} we have
\begin{align}
\label{EqnDetClaimPoly}
\idnorm{\thetait{\titer+1} - \thetastar} & \leq e^{- \frac{1 -
    \contract}{1-\omega} (\titer^{1 - \omega} - 1)}
\idnorm{\thetait{1} - \thetastar} + e^{- \frac{1-\contract}{1-\omega}
  \titer^{1-\omega}} \sum_{\iter=1}^\titer \frac{e^{
    \frac{1-\contract}{1-\omega} \iter^{1-\omega}}}{\iter^{\omega}}
\idnorm{\pathit{\iter}} + \idnorm{\pathit{\titer+1}}.
\end{align}
\end{corollary}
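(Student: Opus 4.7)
The plan is to combine the sandwich relation of Theorem~\ref{ThmConeBound} with the explicit closed-form expressions for $\detit{\titer+1}$ and $\avarit{\titer+1}$ derived in the proof of Corollary~\ref{CorDetClaim}, namely equations~\eqref{EqnDetGenRecur} and~\eqref{EqnAvarGenRecur}. Those identities hold for any stepsize sequence in $(0,1)$, so no new recursion needs to be set up; the problem reduces to estimating certain products and sums that arise under the substitution $\stepit{\iter} = \iter^{-\omega}$.

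First, taking the $\idnorm{\cdot}$-norm of the sandwich relation~\eqref{EqnConeBound} (using that $\idnorm{\cdot}$ is monotone on the order interval $[-\idvec,\idvec]$ by construction) yields
\begin{align*}
\idnorm{\thetait{\titer+1} - \thetastar} \; \leq \; \detit{\titer+1} + \avarit{\titer+1} + \idnorm{\pathit{\titer+1}}.
\end{align*}
Second, I would plug $\stepit{\iter} = \iter^{-\omega}$ into the formulas~\eqref{EqnDetGenRecur} and~\eqref{EqnAvarGenRecur} and apply the elementary inequality $1 - x \leq e^{-x}$ termwise, which gives
\begin{align*}
\prod_{\iter=1}^{\titer}\bigl(1 - (1-\contract)\iter^{-\omega}\bigr) \; \leq \; \exp\!\Bigl(-(1-\contract)\sum_{\iter=1}^{\titer}\iter^{-\omega}\Bigr),
\end{align*}
and analogously for the partial products $\prod_{j=\iter+1}^{\titer}(1 - (1-\contract)j^{-\omega})$ appearing inside the sum defining $\avarit{\titer+1}$.

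Third, I would lower-bound the exponents using integral comparison. Since $x \mapsto x^{-\omega}$ is decreasing, $\iter^{-\omega} \geq \int_{\iter}^{\iter+1} x^{-\omega}\,dx$, so
\begin{align*}
\sum_{\iter=1}^{\titer}\iter^{-\omega} \; \geq \; \int_{1}^{\titer+1} x^{-\omega}\,dx \; = \; \frac{(\titer+1)^{1-\omega} - 1}{1-\omega} \; \geq \; \frac{\titer^{1-\omega} - 1}{1-\omega},
\end{align*}
which, combined with the previous step, yields the claimed bound on $\detit{\titer+1}$. An analogous lower bound on $\sum_{j=\iter+1}^{\titer} j^{-\omega}$ converts the partial product into an exponential of $-\frac{1-\contract}{1-\omega}[\titer^{1-\omega}-\iter^{1-\omega}]$; pulling the factor $e^{-\frac{1-\contract}{1-\omega}\titer^{1-\omega}}$ outside the sum and using $\contract \leq 1$ to absorb the leading constant then reproduces the middle term in~\eqref{EqnDetClaimPoly}. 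Adding the third term $\idnorm{\pathit{\titer+1}}$ from the sandwich bound completes the argument.

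The main obstacle is the book-keeping in the integral comparison: a direct application of the monotonicity bound gives the exponent $\frac{(\titer+1)^{1-\omega} - (\iter+1)^{1-\omega}}{1-\omega}$, whereas the target exponent is $\frac{\titer^{1-\omega} - \iter^{1-\omega}}{1-\omega}$. Reconciling the two requires care with the boundary terms of the sum, for instance by using the shifted identity $\sum_{j=\iter}^{\titer} j^{-\omega} \geq \int_{\iter}^{\titer+1} x^{-\omega}\,dx$ and then peeling off the $\iter^{-\omega}$ endpoint, or by comparing consecutive differences via the mean value theorem applied to $x \mapsto x^{1-\omega}$. Once this book-keeping is handled, the rest of the argument is routine.
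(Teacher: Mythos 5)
Your proposal matches the paper's proof: both reuse the closed-form recursions \eqref{EqnDetGenRecur}--\eqref{EqnAvarGenRecur}, bound the partial products via $1-s \le e^{-s}$ (equivalently $\log(1-s)\le -s$), and lower-bound $\sum_{\iter} \iter^{-\omega}$ by $\int t^{-\omega}\,dt$ using monotonicity of $t\mapsto t^{-\omega}$. The boundary-term bookkeeping you flag as the main obstacle is genuine (the paper's product bound with $T_0=\iter+1$ yields the exponent $(\iter+1)^{1-\omega}$ rather than $\iter^{1-\omega}$ and the paper silently absorbs the discrepancy when ``combining the pieces''), and your proposed fixes---peeling off the endpoint and letting the $\contract$ prefactor absorb the leftover factor, which works since $\contract\, e^{(1-\contract)\iter^{-\omega}} \le \contract\, e^{1-\contract} \le 1$---resolve it correctly.
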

\begin{proof}
Observe that both of the recursions~\eqref{EqnDetGenRecur}
and~\eqref{EqnAvarGenRecur} hold for general stepsizes in the interval
$(0,1)$.  In order to simplify these expressions, we need to bound the
products of various stepsizes.  We claim that for any positive
integers $T_1 > T_0$, we have
\begin{align}
  \label{EqnStepProdBound}
    \prod_{\iter = T_0}^{T_1} \left( 1 -
    \frac{1-\contract}{\iter^\omega} \right) & \leq \exp \Big( -
    \frac{1-\contract}{1-\omega} (T_1^{1-\omega} - T_0^{1-\omega})
    \Big).
  \end{align}
The proof is straightforward.  From the inequality $\log(1 - s) \leq
-s$, valid for $s \in (0,1)$, we find that $\log \left [\prod_{\iter =
    T_0}^{T_1} \Big( 1 - \frac{1-\contract}{\iter^\omega} \Big)
  \right] \leq - (1-\contract) \sum_{\iter=T_0}^{T_1}
\iter^{-\omega}$.  Now since the function $t \mapsto t^{-\omega}$ is
decreasing on the positive real line, we have
\begin{align*}
  \sum_{\iter=T_0}^{T_1} \iter^{-\omega} & \geq \int_{T_0}^{T_1}
  t^{-\omega} dt \; = \; \frac{1}{1-\omega} \big( T_1^{1-\omega} -
  T_0^{1-\omega} \big).
\end{align*}
Combining the pieces yields the claimed bound.
\end{proof}


\section{Applications to $Q$-learning}
\label{SecQlearn}

We now turn to the consequences of our general results for the problem
of $Q$-learning in the tabular setting.

\subsection{Background and set-up}
\label{SecQlearnBack}
Here we provide only a very brief introduction to Markov decision
processes and the $Q$-learning algorithm; the reader can consult
various standard sources (e.g.,~\cite{Puterman05, SutBar18,
  Bertsekas_dyn1, BerTsi96, Sze09}) for more background.  We consider
a Markov decision process (MDP) with a finite set of possible states
$\StateSpace$, and a finite set of possible actions $\ActionSpace$.
The dynamics are probabilistic in nature and influenced by the
actions: performing action $\action \in \ActionSpace$ while in state
$\state \in \StateSpace$ causes a transition to a new state, randomly
chosen according to a probability distribution denoted
$\Tfun{\state}{\action}{\cdot}$.  Thus, underlying the MDP is a family
of probability transition functions $\{ \Tfun{\state}{\action}{\cdot}
\mid (\state, \action) \in \StateSpace \times \ActionSpace \}$.  The
reward function $\reward$ maps state-action pairs to real numbers, so
that $\reward(\state, \action)$ is the reward received upon executing
action $\action$ while in state $\state$.  A deterministic policy
$\pol$ is a mapping from the state space to the action space, so that
action $\pol(\state)$ is taken when in state $\state$.

For a given policy $\pol$, the \emph{$Q$-function or state-action
  function} measures the expected discounted reward obtained by
starting in a given state-action pair, and then following the policy
$\pol$ in all subsequent iterations.  More precisely, for a given
discount factor $\discount \in (0,1)$, we define
\begin{align}
\Qpol(\state, \action) & = \Exs \left [ \sum^\infty_{\titer=0}
  \discount^\titer \reward(\state_\titer, \action_\titer) \mid
  \state_0 = \state, \action_0 = \action \right] \qquad \mbox{where
  $\action_\titer = \pol(\state_\titer)$ for all $\titer \geq 1$.}
\end{align}
Naturally, we would like to choose the policy $\pol$ so as to optimize
the values of the $Q$-function.  From the classical theory of finite
Markov decision processes~\cite{Puterman05, SutBar18, BerTsi96}, this
task is equivalent to computing the unique fixed point of the Bellman
operator.  The Bellman operator is a mapping from
$\real^{|\StateSpace| \times |\ActionSpace|}$ to itself, whose
$(\state, \action)$-entry is given by
\begin{align}
\label{EqnPopBellman}
  \Bellman(\theta)(\state, \action) & \defn \reward(\state, \action) +
  \discount \Exs_{\xstate{}'} \max_{\action' \in \ActionSpace}
  \theta(\xstate{}', \action') \qquad \mbox{where $\xstate{}' \sim
    \Tfun{\state}{\action}{\cdot}$.}
\end{align}
It is well-known that $\Bellman$ is a $\discount$-contraction with
respect to the $\ell_\infty$-norm, meaning that
\begin{subequations}
\begin{align}
\|\Bellman(\theta) - \Bellman(\theta')\|_\infty & \leq \discount
\|\theta - \theta'\|_\infty \quad \mbox{for all $(\state, \action) \in
  (\StateSpace, \ActionSpace)$,}
\end{align}
where the \emph{$\ell_\infty$ or sup norm} is defined in the usual
way---viz.
\begin{align}
\|\theta\|_\infty \defn \max \limits_{(\state, \action)
  \in \StateSpace \times \ActionSpace} |\theta(\state, \action)|.
\end{align}
\end{subequations}
It is this contractivity that guarantees the existence and uniqueness
of the fixed point $\thetastar$ of the Bellman operator (i.e., for
which $\Bellman(\thetastar) = \thetastar$).

In the context of reinforcement learning, the transition dynamics
$\{\Tfun{\state}{\action}{\cdot}, \; (\state, \action) \in \StateSpace
\times \ActionSpace \}$ are unknown, so that it is not possible to
exactly evaluate the Bellman operator. Instead, given some form of
random access to these transition dynamics, our goal is to compute an
approximation to the optimal $Q$-function on the basis of observed
state-action pairs.  Watkins and Dayan~\cite{WatDay92} introduced the
idea of $Q$-learning, a form of stochastic approximation designed to
compute the optimal $Q$-function.  One can distinguish between the
synchronous and asynchronous forms of $Q$-learning; we focus on the
former here.\footnote{Given bounds on the behavior of synchronous
  $Q$-learning, it is possible to transform them into guarantees for
  the asynchronous model via notions such as the cover time of the
  underlying Markov process; we refer the reader to the
  papers~\cite{EveMan03, Aza11} for instances of such conversions.} In
the synchronous setting of $Q$-learning, we make observations of the
following type.  At each time $\titer = 1, 2, \ldots$ and for each
state-action pair $(\state, \action)$, we observe a sample
$x_\titer(\state, \action)$ drawn according to the transition function
$\Tfun{\state}{\action}{\cdot}$.  Equivalently stated, we observe a
random matrix $X_\titer \in \real^{|\StateSpace| \times
  |\ActionSpace|}$ with independent entries, in which the entry
indexed by $(\state, \action)$ is distributed according to
$\Tfun{\state}{\action}{\cdot}$.

Based on these observations, the synchronous form of $Q$-learning
algorithm generates a sequence of iterates $\{\thetait{\titer}
\}_{\titer \geq 1}$ according to the recursion
\begin{align}
\label{EqnQiteration}
\thetait{\titer + 1} & = ( 1- \stepit{\titer}) \thetait{\titer} +
\stepit{\titer} \EmpBellmanIter{\titer}(\thetait{\titer}).
\end{align}
Here $\EmpBellmanIter{\titer}$ is a mapping from $\real^{|\StateSpace|
  \times |\ActionSpace|}$ to itself, and is known as the
\emph{empirical Bellman operator}: its $(\state, \action)$-entry is
given by
  \begin{align}
  \label{EqnEmpBellman}
  \EmpBellmanIter{\titer}(\theta)(\state, \action) = \reward(\state,
  \action) + \discount \max_{\action' \in \ActionSpace} \theta \big(
  \xstate{\titer}, \action' \big) \qquad \mbox{where $\xstate{\titer}
    \equiv \xstate{\titer}(\state, \action) \sim
    \Tfun{\state}{\action}{\cdot}$.}
\end{align}
By construction, for any fixed $\theta$, we have $\Exs[
  \EmpBellmanIter{\titer}(\theta)] = \Bellman(\theta)$, so that the
empirical Bellman operator~\eqref{EqnEmpBellman} is an unbiased
estimate of the population Bellman operator~\eqref{EqnPopBellman}.

There are different ways in which we can express the $Q$-learning
recursion~\eqref{EqnQiteration} in a form suitable for the application
of Theorem~\ref{ThmConeBound} and its corollaries.  One very natural
approach, as followed in some past work on the problem
(e.g.,~\cite{Tsi94,JaaJorSin94, BerTsi96,EveMan03}), is to rewrite the
$Q$-learning update~\eqref{EqnQiteration} as an application of the
population Bellman operator with noise.  In particular, we can write
\begin{align}
\thetait{\titer + 1} & = ( 1- \stepit{\titer}) \thetait{\titer} +
\stepit{\titer} \big \{ \Bellman(\thetait{\titer}) +
\epsnoiseit{\titer} \big \},
\end{align}
where the noise matrix $\epsnoiseit{\titer} =
\EmpBellmanIter{\titer}(\thetait{\titer}) -
\Bellman(\thetait{\titer})$ is zero-mean, conditioned on
$\thetait{\titer}$.  Theorem~\ref{ThmConeBound} and its corollaries
can then be applied with the orthant cone and the $\ell_\infty$ norm,
along with the operators $\HopIter{\titer} \defn \Bellman$ and
quasi-contraction coefficients $\contiter{\titer} = \discount$ for all
iterations $\titer \geq 1$.

For our purposes, it turns out to be more convenient to apply our
general theory with a \emph{different} and time-varying
choice---namely, with $\HopIter{\titer} \defn \EmpBellmanIter{\titer}$
for each $\titer \geq 1$.  This choice satisfies the required
assumptions, since it can be verified that each one of the random
operators $\EmpBellmanIter{\titer}$ is monotonic with respect to the
orthant ordering, and moreover
\begin{align*}
\| \EmpBellmanIter{\titer}(\theta) -
\EmpBellmanIter{\titer}(\thetastar) \|_\infty & \leq \discount
\|\theta - \thetastar\|_\infty \qquad \mbox{for all $\theta$.}
\end{align*}
Setting $\HopIter{\titer} = \EmpBellmanIter{\titer}$ leads to
effective noise variables (as defined in
equation~\eqref{EqnEffectiveNoise}) of the form
\begin{align}
\label{EqnEasyEffective}
  \wnoiseit{\titer} \defn \EmpBellmanIter{\titer}(\thetastar) -
  \Bellman(\thetastar).
\end{align}
These effective noise variables are especially easy to control.  In
particular, note that $\{\wnoiseit{\titer} \}_{\titer \geq 1}$ is an
i.i.d. sequence of random matrices with zero mean, where entry
$(\state, \action) \in \StateSpace \times \ActionSpace$ has variance
\begin{align}
  \label{EqnDefnQvariance}
  \sigma^2(\thetastar)(\state, \action) & \defn \discount^2
  \Exs_{\widetilde{\xstate{}}} \left[ \Big( \max_{\widetilde{\action}
      \in \ActionSpace} \thetastar(\widetilde{\xstate{}},
    \widetilde{\action}) - \Exs_{\xstate{}'} \max_{\action'
      \in \ActionSpace} \thetastar(\xstate{}', \action') \Big)^2
    \right].
\end{align}
Here the expectations $\Exs_{\widetilde{\xstate{}}}$ and
$\Exs_{\xstate{}'}$ are both computed over
$\Tfun{\state}{\action}{\cdot}$.


\subsection{Non-asymptotic guarantees for $Q$-learning}
\label{SecQlearnResults}

With this set-up, we are now equipped to state some non-asymptotic
guarantees for $Q$-learning.  These bounds involve the quantity $D
\defn |\StateSpace| \times |\ActionSpace|$, corresponding to the total
number of state-action pairs, as well as the \emph{span seminorm} of
$\thetastar$ given by
\begin{align}
  \label{EqnSpanNorm}
\spannorm{\thetastar} = \max_{(\state, \action) \in \StateSpace
  \times \ActionSpace} \thetastar(\state, \action) - \min_{(\state,
  \action) \in \StateSpace \times \ActionSpace} \thetastar(\state,
\action).
\end{align}
Note that this is a seminorm (as opposed to a norm), since we have
$\spannorm{\thetastar} = 0$ whenever $\thetastar$ is constant for all
state-action pairs.  See \S 6.6.1 of Puterman~\cite{Puterman05} for
further background on the span seminorm and its properties.  Finally,
we also define the maximal standard deviation
\begin{align}
  \|\myqvar{\thetastar}\|_\infty = \sqrt{\max_{(\state, \action)
      \in \StateSpace \times \ActionSpace}
    \myqvarsq{\thetastar}(\state, \action)},
\end{align}
where the variance $\myqvarsq{\thetastar}(\state, \action)$ was
previously defined in equation~\eqref{EqnDefnQvariance}.

With these definitions in place, we are now ready to state bounds on the
expected $\ell_\infty$-norm error for $Q$-learning with rescaled linear
stepsizes:
\begin{corollary}[$Q$-learning with rescaled linear stepsize]
  \label{CorQlearnLinear}
Consider the step size choice $\stepit{\titer} = \frac{1}{1 +
  (1-\discount) \titer}$. Then there is a universal constant $\unicon$
such that for all iterations $\titer = 1, 2, \ldots$, we have
\begin{align}
  \label{EqnQlearnLinear}
\Exs[\| \thetait{\titer + 1} - \thetastar\|_\infty] & \leq \frac{
  \|\thetait{1} - \thetastar \|_\infty }{1 + (1 - \discount) \titer} +
\frac{\unicon}{1- \discount} \left \{
\frac{\|\myqvar{\thetastar}\|_\infty \sqrt{\log (2 D)} }{\sqrt{1 +
    (1-\discount) \titer}} + \frac{ \spannorm{\thetastar} \log \left(2 e
  D (1 + (1-\discount) \titer) \right) }{1 + (1-\discount) \titer}
\right \}.
\end{align}
\end{corollary}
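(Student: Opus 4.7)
The plan is to obtain the bound by specializing Corollary \ref{CorDetClaim} to the $Q$-learning setup, and then bounding the expected cone-norms of the stochastic sequence $\{\pathit{\iter}\}$ via a sub-exponential concentration argument. Specifically, we work with the orthant cone and its gauge norm $\|\cdot\|_\infty$, with operators $\HopIter{\titer} \defn \EmpBellmanIter{\titer}$ and common quasi-contractivity coefficient $\contiter{\titer} = \discount$. First I would verify the stepsize hypothesis \eqref{EqnStepBound}: a short calculation shows that with $\stepit{\titer} = \frac{1}{1 + (1-\discount)\titer}$, we in fact have $1 - (1-\discount)\stepit{\titer} = \stepit{\titer}/\stepit{\titer-1}$, so the assumption holds with equality. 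Plugging into \eqref{EqnDetClaim} and taking expectations gives
\begin{align*}
\Exs[\|\thetait{\titer+1}-\thetastar\|_\infty] \;\leq\; \frac{\|\thetait{1}-\thetastar\|_\infty}{1 + (1-\discount)\titer} \;+\; \discount\, \stepit{\titer} \sum_{\iter=1}^{\titer} \Exs[\|\pathit{\iter}\|_\infty] \;+\; \Exs[\|\pathit{\titer+1}\|_\infty],
\end{align*}
so the task reduces to bounding $\Exs[\|\pathit{\iter}\|_\infty]$ and summing.

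The next step is to unroll the recursion \eqref{EqnDefnPathit}: since the effective noise takes the simple form \eqref{EqnEasyEffective} and is i.i.d.\ across $\iter$ with mean zero, we have $\pathit{\titer+1} = \sum_{\iter=1}^{\titer} \beta_{\iter,\titer}\, \wnoiseit{\iter}$, where the weights $\beta_{\iter,\titer}$ are telescoping products of $\stepit{j}$ and $(1-\stepit{j})$. For the specific stepsize above, these weights can be computed explicitly and satisfy $\sum_{\iter=1}^{\titer} \beta_{\iter,\titer}^2 \lesssim \frac{1}{1 + (1-\discount)\titer}$ together with $\max_{\iter\leq \titer}\beta_{\iter,\titer} \lesssim \frac{1}{1 + (1-\discount)\titer}$. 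Each noise matrix $\wnoiseit{\iter}$ is entrywise zero-mean with variance at most $\myqvarsq{\thetastar}(\state,\action) \leq \|\myqvar{\thetastar}\|_\infty^2$ and is bounded entrywise by $\discount\,\spannorm{\thetastar}$ (since the Bellman residual depends on $\thetastar$ only through differences of $\max_{\action'}\thetastar(\cdot,\action')$).

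Given these sub-exponential statistics, for each $(\state,\action)$ Bernstein's inequality applied to the weighted sum yields a tail of the form
\begin{align*}
\Prob\!\left(|\pathit{\titer+1}(\state,\action)| > t\right) \;\leq\; 2\exp\!\left(-c\min\!\left\{\frac{t^2\,(1+(1-\discount)\titer)}{\|\myqvar{\thetastar}\|_\infty^2},\; \frac{t\,(1+(1-\discount)\titer)}{\spannorm{\thetastar}}\right\}\right).
\end{align*}
Union-bounding over the $D = |\StateSpace|\cdot|\ActionSpace|$ entries and integrating the tail gives
\begin{align*}
\Exs[\|\pathit{\titer+1}\|_\infty] \;\lesssim\; \frac{\|\myqvar{\thetastar}\|_\infty\sqrt{\log(2D)}}{\sqrt{1+(1-\discount)\titer}} + \frac{\spannorm{\thetastar}\log(2D)}{1+(1-\discount)\titer}.
\end{align*}

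Finally I would plug this bound back into the reduced inequality. The final piece is the $\discount\,\stepit{\titer}\sum_{\iter=1}^{\titer}\Exs[\|\pathit{\iter}\|_\infty]$ term; using the display above with $\iter$ in place of $\titer+1$, the Gaussian-like contribution sums as $\sum_{\iter}(1+(1-\discount)\iter)^{-1/2} \asymp \sqrt{\titer/(1-\discount)}$, and the log-term contribution sums as $\sum_{\iter}(1+(1-\discount)\iter)^{-1}\asymp \frac{\log(1+(1-\discount)\titer)}{1-\discount}$. Multiplying by $\stepit{\titer} = (1+(1-\discount)\titer)^{-1}$ produces exactly a prefactor of $\frac{1}{1-\discount}$ on the two terms in \eqref{EqnQlearnLinear} and inflates the $\log(2D)$ to $\log(2eD(1+(1-\discount)\titer))$. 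The main obstacle I expect is the accounting for the weights $\beta_{\iter,\titer}$: one must verify the $\sum \beta_{\iter,\titer}^2$ and $\max_\iter \beta_{\iter,\titer}$ bounds cleanly, and then track the constants through the Bernstein tail and the telescoping sum, so that the two distinct $1/(1-\discount)$ factors (one from the $\stepit{\titer}\sum_\iter$ aggregation and one already present in the single-iterate bound) combine correctly into the single $1/(1-\discount)$ prefactor of \eqref{EqnQlearnLinear}.
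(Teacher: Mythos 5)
Your proposal is correct and follows essentially the same route as the paper: specialize Corollary~\ref{CorDetClaim} to the empirical Bellman operators, take expectations, control $\Exs[\|\pathit{\iter}\|_\infty]$ by a Bernstein-type sub-exponential bound (variance proxy $\asymp \stepit{\iter}\|\myqvar{\thetastar}\|_\infty^2$, scale $\asymp \stepit{\iter}\spannorm{\thetastar}$) with a union bound over the $D$ entries, and then sum via integral comparison. The only cosmetic difference is that you unroll the autoregression into an explicit weighted sum and apply Bernstein directly, whereas the paper establishes the same moment-generating-function control by induction on the recursion (Lemmas~\ref{LemNoiseRecursions} and~\ref{LemExpectedBound}); the resulting per-iterate bound and the final accounting are identical.
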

\noindent A few remarks about the bound~\eqref{EqnQlearnLinear} are in
order.  Naturally, the first term (involving $\|\thetait{1} -
\thetastar\|_\infty$) measures how quickly the error due to an
initialization $\thetait{1} \neq \thetastar$ decays.  The rate for
this term is $1/\titer$, which is to be expected with a linearly
decaying step size.  The second term in curly braces arises from the
fluctuations of the noise in $Q$-learning, in particular via a
Bernstein bound (see Lemma~\ref{LemExpectedBound}). The term with
$\|\myqvar{\thetastar}\|_\infty$ corresponds to the standard deviation
of the effective noise terms~\eqref{EqnEasyEffective} whereas the term
with $\spannorm{\thetastar}$ arises from the boundedness of the
noises.  Finally, while we have stated a bound on the expected error,
it is also possible to derive a high probability bound: in particular,
if we replace the $\log(D)$ terms with $c \log(D \titer/\delta)$ for a
universal constant $c$, then the bounds hold with probability at least
$1-\delta$.  (See Lemma~\ref{LemNoiseRecursions} in
Appendix~\ref{AppLemNoiseRecursions} for a bound on the moment
generating function of the relevant noise terms.)  \\

\noindent Next we analyze the case of $Q$-learning with a
polynomial-decaying stepsize.
\begin{corollary}[$Q$-learning with polynomial stepsize]
  \label{CorQlearnPoly}
Consider the step size choice $\stepit{\titer} = 1/\titer^{\omega}$
for some $\omega \in (0,1)$.  Then there is a constant $\uniom$,
universal apart from dependence on $\omega$, such that for all
iterations \mbox{$\titer \geq \big(\frac{3 \omega}{ 2(1 -
    \discount)})^{\frac{1}{1-\omega}}$,} we have
\begin{multline}
  \label{EqnQlearnPoly}
\Exs \| \thetait{\titer + 1} - \thetastar\|_\infty \leq e^{-
  \frac{1-\discount}{1-\omega} (\titer^{1-\omega}- 1)} \left \{
\|\thetait{1} - \thetastar \|_\infty + \uniom
(1-\discount)^{-\frac{1}{1-\omega}} \right \} \\
+ \frac{\uniom}{1 - \discount} \left \{ \frac{\|
  \myqvar{\thetastar}\|_\infty \sqrt{\log (2 D)} }{\titer^{\omega/2}}
+ \frac{\spannorm{\thetastar} \log (2 D) }{\titer^{\omega}} \right \}.
\end{multline}
\end{corollary}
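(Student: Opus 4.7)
The plan is to apply the general polynomial-stepsize bound of Corollary~\ref{CorDetClaimPoly} using the $Q$-learning set-up of Section~\ref{SecQlearnBack}: operators $\HopIter{\titer} = \EmpBellmanIter{\titer}$, cone equal to the orthant, induced norm $\idnorm{\cdot} = \|\cdot\|_\infty$, and contractivity coefficient $\contract = \discount$. Under this choice the effective noise~\eqref{EqnEasyEffective} is an \IID sequence $\{\wnoiseit{\titer}\}$ of zero-mean random matrices whose entry $(\state,\action)$ has variance $\myqvarsq{\thetastar}(\state,\action)$ and is almost surely bounded in magnitude by $\discount\,\spannorm{\thetastar}$. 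Unrolling the recursion~\eqref{EqnDefnPathit} exhibits $\pathit{\iter}$ as a weighted sum $\sum_{j=1}^{\iter-1} c_{j,\iter}\,\wnoiseit{j}$ of independent zero-mean noise terms with nonnegative coefficients $c_{j,\iter} = \stepit{j}\prod_{s=j+1}^{\iter-1}(1-\stepit{s})$ summing to at most one.

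The first step is to bound $\Exs\|\pathit{\iter}\|_\infty$. For the polynomial stepsize $\stepit{j} = j^{-\omega}$, an elementary integral-comparison argument (of the same flavor as in the proof of Corollary~\ref{CorDetClaimPoly}) gives $\max_j c_{j,\iter} \lesssim_\omega \iter^{-\omega}$, and hence $\sum_j c_{j,\iter}^2 \leq \max_j c_{j,\iter}\cdot\sum_j c_{j,\iter} \lesssim_\omega \iter^{-\omega}$. A coordinate-wise Bernstein inequality combined with a maximal bound over the $D$ entries then yields
\begin{align*}
\Exs\|\pathit{\iter}\|_\infty \;\lesssim_\omega\; \frac{\|\myqvar{\thetastar}\|_\infty \sqrt{\log(2D)}}{\iter^{\omega/2}} + \frac{\spannorm{\thetastar}\log(2D)}{\iter^{\omega}}.
\end{align*}
I also retain the deterministic fallback $\|\pathit{\iter}\|_\infty \leq \discount\,\spannorm{\thetastar}$, valid for every $\iter$, which will be used for the earliest iterations where the Bernstein bound is inefficient.

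The second and more delicate step is to substitute these bounds into~\eqref{EqnDetClaimPoly}, take expectations, and evaluate the resulting weighted exponential sums $\Sigma_\beta(\titer) \defn e^{-\frac{1-\discount}{1-\omega}\titer^{1-\omega}} \sum_{\iter=1}^{\titer}\phi(\iter)/\iter^{\omega(1+\beta)}$ for $\beta\in\{1/2,\,1\}$, where $\phi(t) \defn \exp\bigl(\tfrac{1-\discount}{1-\omega}\,t^{1-\omega}\bigr)$ satisfies $\phi'(t) = (1-\discount)\,t^{-\omega}\phi(t)$. Comparing the sum with the integral $(1-\discount)^{-1}\int_1^\titer \phi'(t)\,t^{-\omega\beta}\,dt$ and integrating by parts produces a leading boundary term $\phi(\titer)/((1-\discount)\titer^{\omega\beta})$ plus a correction from differentiating $t^{-\omega\beta}$. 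The correction is strictly of lower order precisely when $(1-\discount)\titer^{1-\omega} \gtrsim \omega\beta$, which is exactly the threshold $\titer \geq (3\omega/(2(1-\discount)))^{1/(1-\omega)}$ in the hypothesis. After multiplying through by $e^{-\frac{1-\discount}{1-\omega}\titer^{1-\omega}}$ this yields $\Sigma_\beta(\titer) \lesssim_\omega (1-\discount)^{-1}\titer^{-\omega\beta}$.

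To assemble the final bound I split the weighted sum in~\eqref{EqnDetClaimPoly} at $\iter_0 \asymp (1-\discount)^{-1/(1-\omega)}$. On $\iter > \iter_0$ the Bernstein bound on $\Exs\|\pathit{\iter}\|_\infty$ combined with the two estimates of $\Sigma_{1/2}$ and $\Sigma_1$ produces the entire second brace of~\eqref{EqnQlearnPoly}, and the stand-alone term $\Exs\|\pathit{\titer+1}\|_\infty$ is of the same form and absorbed there. On $\iter \leq \iter_0$ the deterministic fallback contributes at most $\iter_0\cdot\phi(\iter_0)\cdot e^{-\frac{1-\discount}{1-\omega}\titer^{1-\omega}}$ times problem-dependent constants; since $\phi(\iter_0)\leq\exp(1/(1-\omega))$ depends only on $\omega$, this simplifies to a term of the form $\uniom\,(1-\discount)^{-1/(1-\omega)}\,e^{-\frac{1-\discount}{1-\omega}(\titer^{1-\omega}-1)}$ that folds into the first brace together with the deterministic initialization contribution from~\eqref{EqnDetClaimPoly}. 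I expect the main obstacle to be carrying constants that depend only on $\omega$ (and not on $\discount$) through the integration-by-parts step, since the boundary-versus-correction trade-off is precisely what pins down the threshold $\iter_0$ and determines the $(1-\discount)^{-1/(1-\omega)}$ factor.
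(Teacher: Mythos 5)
Your proposal is correct and follows essentially the same route as the paper: apply Corollary~\ref{CorDetClaimPoly} with $\HopIter{\titer}=\EmpBellmanIter{\titer}$, establish the Bernstein-type bound $\Exs\|\pathit{\iter}\|_\infty \lesssim \iter^{-\omega/2}\|\myqvar{\thetastar}\|_\infty\sqrt{\log(2D)} + \iter^{-\omega}\spannorm{\thetastar}\log(2D)$ (the paper's Lemma~\ref{LemExpectedBound}, proved via an inductive MGF bound on the autoregression rather than your direct Bernstein bound on the unrolled weighted sum, but the two are interchangeable), and then control the weighted exponential sums by integral comparison and integration by parts with the same threshold $c^* = (\tfrac{3\omega/2}{1-\discount})^{1/(1-\omega)}$, which is exactly the paper's Lemma~\ref{LemExpBound}. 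Your split at $\iter_0$ with the deterministic fallback $\|\pathit{\iter}\|_\infty \le \discount\,\spannorm{\thetastar}$ plays the same role as the $c^* f(1)$ term the paper keeps inside Lemma~\ref{LemExpBound}, and both yield the $\uniom(1-\discount)^{-1/(1-\omega)}$ contribution in the first brace.
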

At a high level, the interpretation of this bound is similar to that
of the bound in Corollary~\ref{CorQlearnLinear}: the first term
corresponds to the initialization error, whereas the second term
corresponds to the fluctuations induced by the stochasticity of the
update.  When taking the much larger polynomial stepsizes---in
contrast to the linear stepsize case---the initialization error
vanishes much more quickly, in particular as an exponential function
of $(1-\discount) \titer^{1-\omega}$.  On the other hand, the noise
terms exhibit larger fluctuations---with the two terms in the
Bernstein bound scaling as $\titer^{-\omega/2}$ and $\titer^{-\omega}$.


\subsection{Comparison to past work}
\label{SecQlearnComparison}

There is a very large body of work on $Q$-learning in different
settings, and studying its behavior under various criteria.  Here we
focus only on the subset of work that has given bounds on the
$\ell_\infty$-error for discounted problems, which is most relevant
for direct comparison to our results.  The $Q$-learning algorithm was
initially introduced and studied by Watkins and Dayan~\cite{WatDay92}.
General asymptotic results on the convergence of $Q$-learning were
given by Tsitsiklis~\cite{Tsi94} and Jaakkola et
al.~\cite{JaaJorSin94}, who made explicit connection to stochastic
approximation.  Szepesv\'{a}ri~\cite{Sze97} gave an asymptotic
analysis showing (among other results) that the convergence rate of
$Q$-learning with linear stepsizes can be exponentially slow as a
function of $1/(1-\discount)$.  Bertsekas and
Tsitsiklis~\cite{BerTsi96} provided a general framework for the
analysis of stochastic approximation of the $Q$-learning type, and
used it to provide asymptotic convergence guarantees for a broad range
of stepsizes.  Using this same framework, Even-Dar and
Mansour~\cite{EveMan03} performed an epoch-based analysis that led to
non-asymptotic bounds on the behavior of $Q$-learning, both for the
\emph{non-rescaled} linear stepsize $\stepit{\titer} = \titer^{-1}$,
and the polynomial stepsizes $\stepit{\titer} = \titer^{-\omega}$ for
$\omega \in (0,1)$.  It is these non-asymptotic results that are most
directly comparable to our Corollary~\ref{CorQlearnPoly}.

In order to make some precise comparisons, consider the class of
MDPs in which the reward function is uniformly as bounded
\begin{align}
  \label{EqnRmax}
  \max_{(\state, \action) \in \StateSpace \times \ActionSpace}
  |\reward(\state, \action)| \leq \rmax.
\end{align}
Bounds from past work~\cite{EveMan03} are given in terms of the
\emph{iteration complexity} $T_\omega(\epsilon, \discount, \rmax)$ of
the algorithms, meaning the minimum number of iterations $T$ required
to drive the expected $\ell_\infty$-error\footnote{In fact, they
  stated their results as high-probability bounds but up to some
  additional logarithmic factors, these are the same as the bounds on
  expected error.}  $\Exs \|\thetait{T} - \thetastar\|_\infty$ below
$\epsilon$.


\subsubsection{Linear stepsizes}

For the unrescaled linear stepsizes $\stepit{\titer} = \titer^{-1}$,
Even-Dar and Mansour~\cite{EveMan03} proved a pessimistic result:
namely, that $Q$-learning with this step size has an iteration
complexity that grows exponentially in the quantity $1/(1-\discount)$.
As noted previously, earlier work by Szepesv\'{a}ri~\cite{Sze97} had
given an asymptotic analogue of this poor behavior of $Q$-learning
with this linear stepsize.  Moreover, as we discussed following
Corollary~\ref{CorDetClaim}, this type of exponential scaling will
also arise if our general machinery is applied with the ordinary
linear stepsize.

Let us now turn to the $\ell_\infty$ bounds given by
Corollary~\ref{CorQlearnLinear} using the \emph{rescaled} linear
stepsize \mbox{$\stepit{\titer} = \frac{1}{1 + (1-\discount)
    \titer}$.} Translating these bounds into iteration complexity, we
find that taking
\begin{align}
  \label{EqnMyLinear}
\TLR(\epsilon, \discount, \thetastar) & \precsim
\left(\frac{\|\thetait{1} - \thetastar\|_\infty}{1-\discount} +
\frac{\spannorm{\thetastar}}{(1-\discount)^2 } \right) \left (
\frac{1}{\epsilon} \right) + \left(
\frac{\|\myqvar{\thetastar}\|_\infty^2}{(1-\discount)^3 \epsilon^2}
\right)
\end{align}
iterations is sufficient to guarantee $\epsilon$-accuracy in expected
$\ell_\infty$ norm.  Here the notation $\precsim$ denotes an
inequality that holds with constants and log factors dropped, so as to
simplify comparison of results.  As we will see momentarily, all the
quantities in our bound scale at most polynomially as a function of
$1/(1-\discount)$, demonstrating the importance of using the rescaled
linear stepsize.


\subsubsection{Polynomial stepsizes}

We now turn to the polynomial step sizes $\stepit{\titer} =
\titer^\omega$ for $\omega \in (0,1)$, and compare our results to past
work. For these polynomial step sizes, Even-Dar and
Mansour~\cite{EveMan03} (in Theorem 2 of their paper) proved that for
any MDP with $\rmax$-bounded rewards and discount factor $\discount$,
it suffices to take at most
\begin{align}
  \label{EqnEveMan}
  T_\omega(\epsilon, \discount, \rmax) & \precsim \left(
  \frac{\rmax^2}{(1-\discount)^4 \epsilon^2}
  \right)^{\frac{1}{\omega}} + \left \{\frac{1}{1-\discount} \log
  \left(\frac{\rmax }{(1-\discount) \, \epsilon} \right) \right
  \}^{\frac{1}{1-\omega}}
\end{align}
in order to drive the error below $\epsilon$.  Here as before, our
notation $\precsim$ indicates that we are dropping constants and other
logarithmic factors (including those involving $\log D$).

On the other hand, Corollary~\ref{CorQlearnPoly} in this paper
guarantees that for a $\discount$-discounted MDP with optimal
$Q$-function $\thetastar$, initializing at $\thetait{1} = 0$ and
taking
\begin{align}
\label{EqnMyPoly}
  T_\omega(\epsilon, \discount, \thetastar) & \precsim \left(
  \frac{\|\myqvar{\thetastar}\|^2_\infty}{(1-\discount)^2 \epsilon^2}
  \right)^{\frac{1}{\omega}} + \left(
  \frac{\spannorm{\thetastar}^2}{(1-\discount)^2 \epsilon^2}
  \right)^{\frac{1}{2 \omega}} + \left \{\frac{1}{1-\discount} \log
  \left( \frac{\rmax}{(1-\discount) \, \epsilon} \right) \right
  \}^{\frac{1}{1-\omega}}
\end{align}
steps is sufficient to achieve an $\epsilon$-accurate estimate. 
As we will see momentarily, our guarantee~\eqref{EqnMyPoly} reduces to the
earlier guarantee~\eqref{EqnEveMan} in the worst-case setting.


\subsubsection{Worst-case guarantees}

In our bounds for $Q$-learning, the $\thetastar$-specific difficulty
enters via the span seminorm $\spannorm{\thetastar}$ and the maximal
standard deviation $\|\myqvar{\thetastar}\|_\infty$.  In this section,
we bound these quantities in a worst-case sense, and recover some
guarantees known from past work. In particular, let $\MDP(\discount,
\rmax)$ denote the set of all optimal $Q$-functions that can be
obtained from a $\discount$-discounted MDP with an $\rmax$-uniformly
bounded reward function (as in equation~\eqref{EqnRmax}).

\begin{lemma}
\label{LemSTDBound}  
Over the class $\MDP(\discount, \rmax)$, we have the uniform bounds
\begin{subequations}
\begin{align}
  \label{EqnQbounds}
  \sup_{\thetastar \in \MDP(\discount, \rmax)} \spannorm{\thetastar}
\leq
2  \sup_{\thetastar \in \MDP(\discount, \rmax)} \|\thetastar\|_\infty
& \leq \frac{2 \discount \rmax}{1-\discount} \quad \mbox{and} \\
\sup_{\thetastar \in \MDP(\discount, \rmax)}
\|\sigma(\thetastar)\|_\infty & \leq \frac{\rmax}{1-\discount}
\end{align}
\end{subequations}
\end{lemma}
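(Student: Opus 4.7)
The plan is to reduce both bounds to an $\ell_\infty$ control of $\thetastar$, obtained directly from the Bellman fixed-point equation. Since $\thetastar = \Bellman(\thetastar)$, for every state-action pair $(\state, \action)$ we have
\begin{align*}
|\thetastar(\state, \action)| \;\leq\; |\reward(\state, \action)| + \discount\, \Exs_{\xstate{}'} \Big|\max_{\action' \in \ActionSpace} \thetastar(\xstate{}', \action')\Big| \;\leq\; \rmax + \discount\, \|\thetastar\|_\infty,
\end{align*}
by the triangle inequality combined with the reward bound~\eqref{EqnRmax}. Taking the supremum on the left and rearranging gives the standard geometric-series estimate $\|\thetastar\|_\infty \leq \rmax/(1-\discount)$, uniformly over $\thetastar \in \MDP(\discount, \rmax)$.

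For the span seminorm inequality, I then combine this with the elementary fact that $\spannorm{\theta} = \max \theta - \min \theta \leq 2\|\theta\|_\infty$ for any bounded function $\theta$, which yields the first claim of the lemma.

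For the variance bound, I rewrite the definition~\eqref{EqnDefnQvariance} as $\myqvarsq{\thetastar}(\state, \action) = \discount^2\, \var_{\xstate{}' \sim \Tfun{\state}{\action}{\cdot}}\bigl[\max_{\action'} \thetastar(\xstate{}', \action')\bigr]$. The random variable $Z \defn \max_{\action'} \thetastar(\xstate{}', \action')$ is bounded in absolute value by $\|\thetastar\|_\infty$ almost surely, so $\var(Z) \leq \Exs[Z^2] \leq \|\thetastar\|^2_\infty$. Inserting the $\ell_\infty$ bound from the first step and taking square roots produces $\|\myqvar{\thetastar}\|_\infty \leq \discount\rmax/(1-\discount) \leq \rmax/(1-\discount)$, establishing the second claim. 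I do not anticipate any serious technical obstacle here; both parts are essentially one-line consequences of the Bellman fixed-point identity together with the classical geometric-series bound on $Q$-functions of $\discount$-discounted MDPs. If a sharper constant were needed, one could replace the crude inequality $\var(Z) \leq \Exs[Z^2]$ by Popoviciu's inequality $\var(Z) \leq \tfrac{1}{4}(\max Z - \min Z)^2$, giving a bound phrased in terms of $\spannorm{\thetastar}$; but this refinement is unnecessary for the stated guarantee.
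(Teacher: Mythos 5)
Your proof is correct and follows essentially the same route as the paper: the Bellman fixed-point identity plus the reward bound give $\|\thetastar\|_\infty \leq \rmax/(1-\discount)$, the span bound follows from $\spannorm{\theta} \leq 2\|\theta\|_\infty$, and the variance is controlled by the boundedness of $\max_{\action'}\thetastar(\cdot,\action')$. In fact your variance step, $\var(Z) \leq \Exs[Z^2] \leq \|\thetastar\|_\infty^2$, is slightly sharper than the paper's own displayed estimate $\var \leq 4\discount^2\|\thetastar\|_\infty^2$ (which would only yield $2\discount\rmax/(1-\discount)$), and it is your version that actually delivers the constant $\rmax/(1-\discount)$ stated in the lemma.
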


Lemma~\ref{LemSTDBound} allows us to derive uniform versions of our
previous iteration complexity bounds.  For simplicity, we assume
initialization at $\thetait{1} = 0$, so that $\|\thetait{1} -
\thetastar\|_\infty = \|\thetastar\|_\infty$.  For the rescaled linear
stepsize, for all $\epsilon \in (0, \rmax)$, we have
\begin{align}
  \label{EqnMyLinearWorst}
  \sup_{\thetastar \in \MDP(\discount, \rmax)} \TLR(\epsilon,
  \discount, \thetastar) & \precsim \left(
  \frac{\rmax^2}{(1-\discount)^5 \epsilon^2} \right).
\end{align}
On the other hand, for the polynomial step size, we have
\begin{align}
\label{EqnMyPolyWorst}
\sup_{\thetastar \in \MDP(\discount, \rmax)} T_\omega(\epsilon,
\discount, \thetastar) & \precsim \left(
\frac{\rmax^2}{(1-\discount)^4 \epsilon^2} \right)^{\frac{1}{\omega}}
+ \left \{\frac{1}{1-\discount} \log \left(\frac{\rmax}{(1 -
  \discount) \, \epsilon} \right) \right \}^{\frac{1}{1-\omega}}.
\end{align}
Note that this bound shows a trade-off between the two terms as a
function of $\omega \in (0,1)$.  Setting $\omega = \frac{4}{5}$
optimizes the trade-off in terms of $1/(1-\discount)$, and yields a
bound of the order $\left(\frac{\rmax}{\epsilon} \right)^\frac{5}{2}
\frac{1}{(1 - \discount)^5}$, equivalent to that proved in past
work~\eqref{EqnEveMan}.  Note that this bound has the same scaling in
$\discount$ as the linear rescaled bound~\eqref{EqnMyLinearWorst}, but
with worse behavior in the ratio $\rmax/\epsilon$.

\subsection{Simulation study of $\discount$-dependence}
\label{SecHard}

It is natural to wonder whether or not the bounds given in
Corollaries~\ref{CorQlearnLinear} and~\ref{CorQlearnPoly} give sharp
scalings for the dependence of $Q$-learning on the variance
$\sigma(\thetastar)$ and span seminorm $\spnorm{\thetastar}$.  In this
section, we provide empirical evidence for the sharpness of our
bounds, as well as a case in which they fail to be sharp.

\begin{figure}[h]
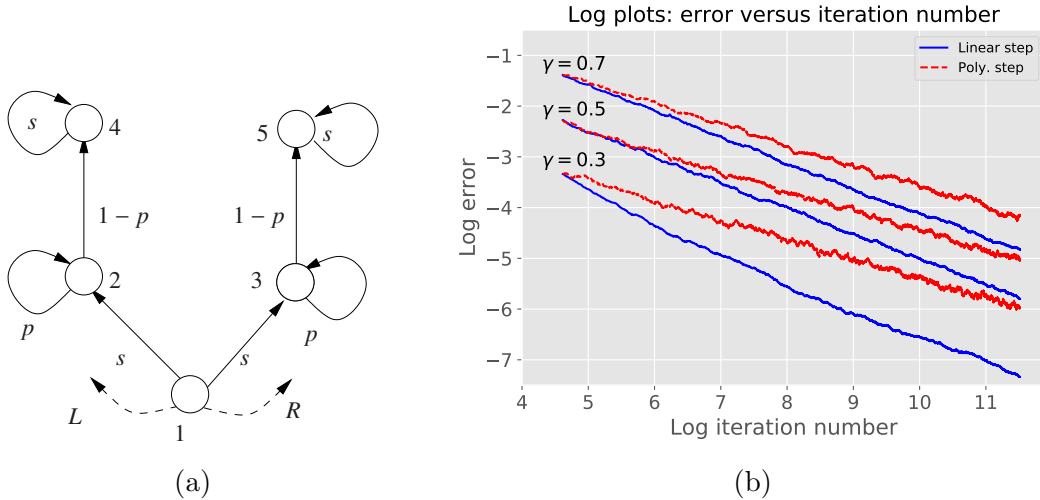

\begin{center}
  \begin{tabular}{ccc}
    \widgraph{0.3\textwidth}{new_hard} & &
    \widgraph{0.5\textwidth}{\figdir/fig_logplots_linear_vs_poly} \\
    (a) & & (b)
  \end{tabular}
  \caption{(a) Transition diagram of a class of MDPs for $Q$-learning,
    inspired by an example from Azar et al.~\cite{AzaMunKap13}.  This
    simplification of their example consists of state space
    $\StateSpace = \{1, 2, 3, 4, 5 \}$ and action space $\ActionSpace
    = \{L, R \}$.  When in state $\state = 1$, taking action $\action
    = L$ leads to state $2$ with probability $s = 1$, while taking
    action $\action = R$ leads to state $4$ with probability $s = 1$.
    When in state $2$, regardless of the action, we stay in state $2$
    with probability $p$ and transition to state $4$ with probability
    $1-p$.  State $3$ behaves in an analogous manner, transitioning to
    state $5$ with probability $1-p$ and remaining fixed with
    probability $p$.  States $4$ and $5$ are absorbing states.  States
    $2$ and $3$ have reward $1$ for either action; all remaining
    state-action pairs have zero reward. We are free to choose the
    parameter $p \in (0,1)$ in order to vary the difficulty of the
    problem.  (b) Log-log plots of the $\ell_\infty$-norm error of the
    $Q$-learning iterates versus iteration number.  Standard theory
    predicts that on the log-log scale, the linear stepsize should
    yield a line with slope $-1/2$, whereas the polynomial stepsize
    should yield a line with slope $-\omega/2$.  Of more interest to
    us is the rate at which these curves shift upwards as $\discount$
    increases towards $1$.}
  \label{FigHard}
\end{center}
\end{figure}

In order to do so, we consider a class of MDPs introduced in past work
by Azar et al.~\cite{AzaMunKap13}, and used to prove minimax lower
bounds.  For our purposes---namely, exploring sharpness with the
discount $\discount$---it suffices to consider an especially simple
instance of these ``hard'' problems. As illustrated in
Figure~\ref{FigHard}(a), this MDP consists of a five element space
$\StateSpace = \{1, 2, \ldots, 5 \}$, and a two element action space
$\ActionSpace = \{L, R \}$, shorthand for ``left'' and ``right''
respectively.  When in state $1$, taking action $L$ yields a
deterministic transition (i.e., with probability $s = 1$) to state $2$
whereas taking action $R$ leads to a deterministic transition to state
$3$.  When in state $2$, taking either action leads to a transition to
state $4$ with probability $1-p$, and remaining in state $2$ with
probability $p$.  (The same assertion applies to the behavior in state
$3$, with state $4$ replaced by state $5$.)  Finally, both states $4$
an $5$ are absorbing states.  The reward function in zero in every
state except for states $2$ and $3$, for which we have
\begin{align*}
  \reward(2, L) = \reward(2, R) = \reward(3, L) = \reward(3, R) = 1.
\end{align*}
A straightforward computation yields that the optimal $Q$-function has
the form
\begin{align*}
  \thetastar(\state, \action) & = \begin{cases}
    \frac{\discount}{1 - p \discount} & \mbox{for $\state = 1$} \\
    \frac{1}{1 - p \discount} & \mbox{for $\state \in \{2, 3 \}$} \\
    0 & \mbox{for $\state \in \{4, 5 \}$}
  \end{cases}
\end{align*}
For any $\discount \in (1/4, 1)$, it is valid to set $p = \frac{4
  \discount -1}{3 \discount}$.

If we run $Q$-learning either with a rescaled linear stepsize (as in
Corollary~\ref{CorQlearnLinear}) or a polynomial stepsize (as in
Corollary~\ref{CorQlearnPoly}), then as shown in
Figure~\ref{FigHard}(b), we see convergence at the rate
$\titer^{-1/2}$ for the linear stepsize, and $\titer^{-\omega/2}$ for
the polynomial stepsize.  This is consistent with the theory, and
standard for stochastic approximation.  Of most interest to us is the
behavior of the curves as the discount factor $\discount$ is changed;
as seen in Figure~\ref{FigHard}(b), the curves shift upwards,
reflecting the fact that problems with larger value of $\discount$ are
harder.  We would like to understand these shifts in a quantitative manner.


\subsubsection{Behavior of $\spannorm{\thetastar}$ and $\|\myqvar{\thetastar}\|_\infty$}

For this particular class of problems, let us compute the quantities
$\spannorm{\thetastar}$ and $\|\myqvar{\thetastar}\|_\infty$ that play
a key role in our bounds.  First, observe that with our choice of $p$
from above, we have $\frac{1}{1 - p \discount} = \frac{3}{3 - (4
  \discount - 1)} \; = \; \frac{3}{4} \left( \frac{1}{1 - \discount}
\right)$, which implies that
\begin{subequations}
\begin{align}
  \spannorm{\thetastar} = \frac{3}{4} \frac{1}{1-\discount} - 0 \; = \;
  \frac{3}{4} \frac{1}{1-\discount}.
\end{align}
Recalling that $\rmax = 1$ in our construction, observe that (up to a
constant factor) this $Q$-function saturates the worst-case upper
bound on $\spannorm{\thetastar}$ from Lemma~\ref{LemSTDBound}.  As for
the maximal standard deviation term, as shown in
Appendix~\ref{AppHard}, as long as $\discount \geq 1/2$, it is sandwiched as
\begin{align}
  \label{EqnHardLower}
 \frac{1}{4 \sqrt{3}} \frac{1}{\sqrt{1- \discount}} \; \leq \;
 \|\myqvar{\thetastar}\|_\infty \; \leq \;
 \frac{1}{\sqrt{1-\discount}}.
\end{align}
\end{subequations}
\begin{figure}[h]
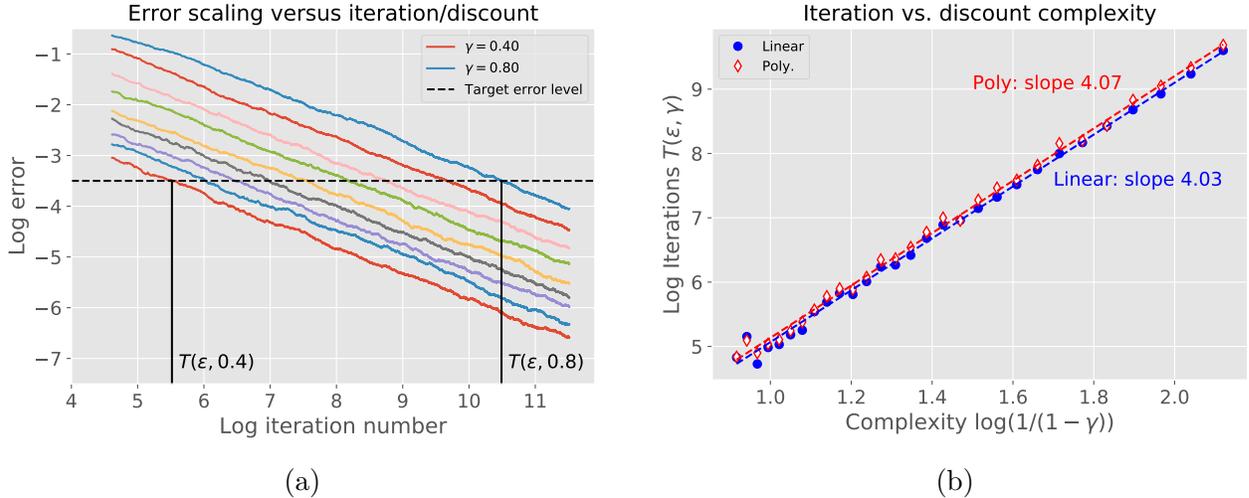

\begin{center}
  \begin{tabular}{cc}
    \widgraph{0.5\textwidth}{\figdir/fig_errorscale_setup} &
    \widgraph{0.5\textwidth}{\figdir/fig_iteration_complexity_last} \\
    (a) &  (b)
  \end{tabular}
  \caption{(a) Illustration of the computation of the iteration
    complexity $T(\epsilon; \discount)$ based on simulation data.  It
    is corresponds to the first time that the average-$\ell_\infty$
    norm error drops below $\epsilon$; varying the discount factor
    $\discount$ corresponds to problems of differing hardness. (b)
    Plots of the computed iteration complexity $T(\epsilon,
    \discount)$ for $\epsilon = e^{-2}$ versus the complexity
    parameter $1/(1- \discount)$ for both the rescaled linear
    stepsize, and the polynomial stepsize with $\omega = 0.75$.  In
    both cases, our theory predicts that these fits should be linear
    on a log-log scale with a slope of $4$.}
  \label{FigErrorScale}
\end{center}
\end{figure}
Consequently, by examining our iteration complexity
bounds~\eqref{EqnMyLinear} and~\eqref{EqnMyPoly}, we expect that for
any fixed $\epsilon \in (0,1)$, the iteration complexity $T(\epsilon,
\discount)$ of $Q$-learning as a function of $\discount$ should be
upper bounded as $(1 - \discount)^{-4}$, and moreover, this bound
should hold for either the rescaled linear stepsize, or the polynomial
stepsize with $\omega = \frac{3}{4}$.  If our bounds cannot be
improved in general, then we expect to see that this predicted bound
is met with equality in simulation.  Accordingly, our numerical
simulations were addressed to testing the correctness of this
prediction.

Figure~\ref{FigErrorScale} illustrates the results of our simulations.
Panel (a) illustrates how the iteration complexity $T(\epsilon,
\discount)$ was estimated in simulation.  For a given algorithm and
setting of $\discount$, we ran the algorithm for $10^6$ steps,
thereby obtaining a path of $\ell_\infty$-norm errors at each
iteration $\titer = 1, \ldots, 10^6$.  We averaged these paths over a
total of $10^3$ independent trials.  Given these Monte Carlo estimates
of the average $\ell_\infty$-error, for a given $\epsilon$, we compute
$T(\epsilon, \discount)$ by finding the smallest iteration $\titer$ at
which the estimated $\ell_\infty$-error falls below $\epsilon$.  Panel
(a) illustrates two instances of this calculation, for the settings
$\discount = 0.4$ and $\discount = 0.8$ respectively.

For the fixed tolerance $\epsilon = e^{-2}$, we repeated this Monte
Carlo estimation procedure in order to estimate the quantity
$T(\epsilon, \discount)$ for each $\discount \in \{0.60, 0.61, \ldots,
0.90 \}$, and then plotted the results on a log-log scale, as shown in
panel (b).  We fit each set of points with ordinary linear regression,
yielding estimates of the slope $\betapoly = 4.07 \pm 0.04$ and
$\betalin = 4.03 \pm 0.05$.  We also applied a $t$-test with the null
hypothesis being that the slope is $4$; it returned $p$-values of
$0.12$ and $0.50$ in the polynomial and linear cases respectively.
Thus, these simulations provide empirical evidence for the sharpness
of our bounds in this ensemble of problems.

\subsubsection{A non-sharp example}

The example from the previous section shows that our bounds are
unimprovable in general.  However, there are instances for which our
bounds are \emph{not} sharp, with one the simplest being an MDP in
which there is a root state that transitions (w.p.  $1/2$) to one of
two other states, both of which are absorbing.  If we set the rewards
at the two absorbing states to be $-1$ and $1$ respectively, then the
associated $Q$-functions at the two absorbing states will be
$-\frac{1}{1-\discount}$ and $\frac{1}{1-\discount}$ respectively.
These highly discrepant values mean that the variance of the empirical
Bellman operator at the root state will scale as
$\frac{1}{(1-\discount)^2}$.  Consequently, our bounds yield
guarantees with the rather pessimistic scaling
$\frac{1}{(1-\discount)^5}$, a behavior that is not observed in
practice.  The issue here is for problems of this type, the quantity
$\sigma(\thetastar)$ is overly pessimistic, as it measures only the
local variance (and not how fluctuations can be diminished by repeated
applications of the Bellman operator).  Resolving this gap requires a
more sophisticated analysis, as in the work of Azar et
al.~\cite{AzaMunKap13} on model-based $Q$-fitting.


\section{Discussion}
\label{SecDiscussion}

The main contribution of this paper (Theorem~\ref{ThmConeBound}) is a
``sandwich'' result for a class of stochastic approximation (SA)
algorithms based on operators that satisfy monotonicity and
quasi-contractivity conditions with respect to an underlying cone.
This general result can be used to derive non-asymptotic error bounds
for SA procedures with various stepsizes.  We then illustrated this
general result by applying it to derive non-asymptotic bounds for
synchronous $Q$-learning applied to discrete state-action problems.
We hope that this general result proves useful in analyses of other
stochastic approximation algorithms.

This paper leaves open various questions.  Our analysis covers both
the cases of linearly decaying stepsizes, as well as the (more slowly
decaying) polynomial choices for some $\omega \in (0,1)$.  The
advantage of polynomial stepsizes is that they are robust to the
choices of constants; in contrast, the linear stepsize behaves badly
unless it is suitably rescaled, requiring knowledge of the contraction
coefficient. On the flipside, the polynomial choices perform more
poorly in damping noise, so that the convergence guarantees show
inferior scaling in the inverse tolerance $(1/\epsilon)$.  A standard
resolution to this difficulty is to perform Polyak-Ruppert
averaging~\cite{NemYu83,PolJud92,Rup88}: that is, to run the algorithm
with the slower polynomial stepsizes, and then average the iterates
along the path with a $1/\titer$ stepsize.  It would be interesting to
extend our non-asymptotic analysis to the the class of SA procedures
considered here combined with such averaging methods.

In the specific context of $Q$-learning, as we noted earlier, our
results also show that standard $Q$-learning is a sub-optimal
algorithm, in that we exhibited an example where its iteration
complexity scales as $(1-\discount)^{-4}$, as opposed to the
$(1-\discount)^{-3}$ scaling that can be achieved by model-based batch
$Q$-iteration~\cite{AzaMunKap13}.  Thus, our work highlights a gap
between standard $Q$-learning as a model-free method and a model-based
method, as has been done in recent work on the LQR
problem~\cite{TuRec19}.  It is natural to wonder whether this gap is
specific to $Q$-learning, or also applies to other algorithms with
essentially equivalent storage and/or computational requirements.  We
note that the speedy $Q$-learning algorithm~\cite{Aza11}, requiring
only twice the storage of $Q$-learning, has been shown, in the setting
considered here, to have worst-case iteration complexity scaling as
$\frac{1}{(1-\discount)^4} \frac{1}{\epsilon^2}$.

Moving beyond the worst-case viewpoint, our bounds on $Q$-learning are
instance-specific, depending on the unknown optimal $Q$-function via
its variance~\eqref{EqnDefnQvariance} and span
seminorm~\eqref{EqnSpanNorm}.  These two quantities vary substantially
over the space of MDPs with $\rmax$-bounded rewards, being much
smaller than their worst-case values for many problems.  Thus, it
would be interesting to develop instance-based lower bounds on the
performance of $Q$-learning and other algorithms.  We note that such
types of localized analysis have been given both in the bandit
literature~\cite{BubCes12}, and in non-parametric
statistics~\cite{WeiFanWai18}.


\subsection*{Acknowledgements}
Thanks to A. L. Z. Pananjady and K. B. B. Khamaru for helpful
discussions.  This work was partially supported by Office of Naval
Research Grant ONR-N00014-18-1-2640 and National Science Foundation
Grant NSF-DMS-1612948.


\appendix

\section{Proof of Theorem~\ref{ThmConeBound}}
\label{AppThmConeBound}

We begin by defining a recentered version of the
updates~\eqref{EqnRecursion}.  Subtracting $\thetastar$ from both
sides yields the recentered recursion
\begin{align*}
\thetait{\titer + 1} - \thetastar & = (1 - \stepit{\titer}) \big(
\thetait{\titer} - \thetastar \big) + \stepit{\titer} \big \{
\HopIter{\titer}(\thetait{\titer}) - \thetastar + \epsnoiseit{\titer}
\big \}.
\end{align*}
Define the operator $\TopIter{\titer}(\Delta) =
\HopIter{\titer}(\thetastar + \Delta) - \HopIter{\titer}(\thetastar)$.
With this notation, we see that the error sequence $\DelIt{\titer}
\defn \thetait{\titer} - \thetastar$ evolves according to the
recursion
\begin{align}
\DelIt{\titer + 1} & = (1 - \stepit{\titer}) \DelIt{\titer} +
\stepit{\titer} \big \{ \TopIter{\titer}(\DelIt{\titer}) +
\underbrace{\HopIter{\titer}(\thetastar) - \thetastar +
  \epsnoiseit{\titer}}_{\wnoiseit{\titer}} \big \},
\end{align}
where we have re-introduced the effective noise variables
$\wnoiseit{\titer}$, as previously defined in
equation~\eqref{EqnEffectiveNoise}.  By our assumptions on
$\HopIter{\titer}$, we see that each one of the recentered operators
$\TopIter{\titer}$ is monotonic with respect to the cone $\Cone$, and
contractive in the sense that $\idnorm{\TopIter{\titer}(\Delta)} \leq
\contiter{\titer} \idnorm{\Delta}$ for all $\Delta \in \LinSpace$.

We first prove the upper bound in equation~\eqref{EqnConeBound} via
induction on the iteration number $\titer$.  Beginning with the base
case, for iteration $\titer = 1$, we have
\begin{align*}
  \DeltaIt{1} & \preceq \idnorm{\thetait{1} - \thetastar}\; \idvec \;
  = \; \detit{1} \idvec + \avarit{1} \idvec + \pathit{1},
\end{align*}
where we have used the initialization conditions $\avarit{1} = 0$ and
$\pathit{1} = \zerovec$.

We now assume that the claim holds at iteration $\titer-1$, and show
that it holds for iteration $\titer$.  We have
\begin{align*}
\DeltaIt{\titer} & = (1 - \stepit{\titer-1}) \DeltaIt{\titer-1} +
\stepit{\titer-1} \TopIter{\titer-1}(\DeltaIt{\titer-1}) +
\stepit{\titer-1} \wnoiseit{\titer-1} \\
& \preceq (1 - \stepit{\titer-1}) \left( \detit{\titer-1}\idvec +
\avarit{\titer-1} \idvec + \pathit{\titer-1} \right) +
\stepit{\titer-1} \TopIter{\titer - 1} \left( \detit{\titer-1}\idvec +
\avarit{\titer-1} \idvec + \pathit{\titer-1} \right) +
\stepit{\titer-1} \wnoiseit{\titer-1},
\end{align*}
where the inequality follows from the inductive assumption; the
monotonicity of the operator $\TopIter{\titer-1}$ with respect to the
cone; and the fact that $\idnorm{e} = 1$.  Now by the
$\contiter{\titer-1}$-contractivity of $\TopIter{\titer-1}$, we have
\begin{multline*}
  \DeltaIt{\titer} \preceq (1 - \stepit{\titer-1}) \left(
  \detit{\titer-1}\idvec + \avarit{\titer-1} \idvec +
  \pathit{\titer-1} \right) + \contiter{\titer-1} \stepit{\titer-1}
  \idnorm{ \detit{\titer-1}\idvec + \avarit{\titer-1} \idvec +
    \pathit{\titer-1} } \; \idvec + \stepit{\titer-1}
  \wnoiseit{\titer-1} \\
 \preceq \underbrace{ \left( 1 - (1-\contiter{\titer-1})
   \stepit{\titer-1} \right) \detit{\titer-1}}_{\detit{\titer}} \idvec
 + \underbrace{\left \{ \left( 1 - (1-\contiter{\titer-1})
   \stepit{\titer-1} \right) \avarit{\titer-1} + \discount
   \stepit{\titer-1} \idnorm{\pathit{\titer-1}} \right
   \}}_{\avarit{\titer}} \idvec \\
  + \underbrace{(1 - \stepit{\titer-1}) \pathit{\titer-1} +
    \stepit{\titer-1} \wnoiseit{\titer-1}}_{\pathit{\titer}},
\end{multline*}
which establishes the claim in part (a).

Turning to the lower bound in part (a), again we proceed via induction
on the iteration number $\titer$.  Beginning with the base case, for
iteration $\titer = 1$, we have
\begin{align*}
  \DeltaIt{1} & \succeq -\idnorm{\DeltaIt{1}} \idvec \; = \;
  -\detit{1} \idvec - \avarit{1} \idvec + \pathit{1},
\end{align*}
as required.

We now assume that the claim holds at iteration $\titer-1$, and show
that it holds for iteration $\titer$.  We have
\begin{align*}
\DeltaIt{\titer} & = (1 - \stepit{\titer-1}) \DeltaIt{\titer-1} +
\stepit{\titer-1} \TopIter{\titer-1}(\DeltaIt{\titer-1}) +
\stepit{\titer-1} \wnoiseit{\titer-1} \\
& \succeq (1 - \stepit{\titer-1}) \left( - \detit{\titer-1}\idvec -
\avarit{\titer-1} \idvec + \pathit{\titer-1} \right) +
\stepit{\titer-1} \TopIter{\titer-1} \left( -\detit{\titer-1}\idvec -
\avarit{\titer-1} \idvec + \pathit{\titer-1} \right) +
\stepit{\titer-1} \wnoiseit{\titer-1},
\end{align*}
where the inequality follows from the inductive assumption, and the
monotonicity of the operator $\TopIter{\titer-1}$ on the cone.  Now by
the $\contiter{\titer-1}$-contractivity of $\TopIter{\titer-1}$, we
have
\begin{multline*}
  \DeltaIt{\titer} \succeq (1 - \stepit{\titer-1}) \left(
  -\detit{\titer-1} \idvec - \avarit{\titer-1} \idvec +
  \pathit{\titer-1} \right) - \contiter{\titer-1} \stepit{\titer-1} \idnorm{
    -\detit{\titer-1} \idvec - \avarit{\titer-1} \idvec +
    \pathit{\titer-1} } \; \idvec + \stepit{\titer-1}
  \wnoiseit{\titer-1} \\
 \succeq \underbrace{ - \left( 1 - (1-\contiter{\titer-1}) \stepit{\titer-1}
   \right) \detit{\titer-1}}_{-\detit{\titer}} \idvec + \underbrace{-
   \left \{ \left( 1 - (1-\contiter{\titer-1}) \stepit{\titer-1} \right)
   \avarit{\titer-1} + \contiter{\titer-1} \stepit{\titer-1}
   \idnorm{\pathit{\titer-1}} \right \}}_{-\avarit{\titer}} \idvec \\
 + \underbrace{(1 - \stepit{\titer-1}) \pathit{\titer-1} +
   \stepit{\titer-1} \wnoiseit{\titer-1}}_{\pathit{\titer}},
\end{multline*}
which completes the proof of the lower bound.


\section{Proofs for $Q$-learning}
\label{SecProofQ}

In this appendix, we collect the proofs of our results on
$Q$-learning, beginning with the statements and proofs of some
auxiliary lemmas in Section~\ref{SecAuxLem} and followed by the proofs
of Corollaries~\ref{CorQlearnLinear} and~\ref{CorQlearnPoly} in
Sections~\ref{AppProofCorQlearnLinear} and~\ref{AppProofCorQlearnPoly}
respectively.


\subsection{Proofs of some auxiliary lemmas}
\label{SecAuxLem}

In this appendix, we collect the statements and/or proofs of various
auxiliary lemmas needed for the proofs of
Corollaries~\ref{CorQlearnLinear} and~\ref{CorQlearnPoly}.

\subsubsection{Proof of Lemma~\ref{LemSTDBound}}
\label{AppLemSTDBound}

The inequality $\spannorm{\thetastar} \leq 2 \|\thetastar\|_\infty$ is
a standard fact~\cite{Puterman05}; it follows by applying the triangle
inequality to the definition of the span seminorm.  Since $\thetastar$
is a fixed point of the Bellman equation~\eqref{EqnPopBellman}, we
have
  \begin{align*}
    \thetastar(\state, \action) & = \reward(\state, \action) +
    \discount \Exs_{\xstate{}'} \max_{\action' \in \ActionSpace}
    \thetastar(\xstate{}', \action') \; \leq \; \rmax + \discount
    \|\thetastar\|_\infty
  \end{align*}
  from which it follows that $\sup \limits_{\thetastar \in
    \MDP(\discount, \rmax)} \|\thetastar\|_\infty \leq
    \frac{\rmax}{1-\discount}$ as claimed.

Turning to the variance bound, from the definition of the empirical
Bellman operator, we have
\begin{align*}
\var\left(\EmpBellman(\thetastar) \right) & \leq 4 \discount^2
\|\thetastar\|_\infty^2 \; \leq \; \frac{4 \discount^2
  \rmax^2}{(1-\discount)^2},
    \end{align*}
as claimed.
    
\subsubsection{Controlling the MGF of non-stationary autoregressive processes}
\label{AppLemNoiseRecursions}

Let $\{\nepsnoiseit{\titer} \}_{\titer \geq 1}$ be a sequence of
random variables and let $\{\stepit{\titer} \}_{\titer \geq 1}$ be a
sequence of stepsizes in $(0,1)$. Define a new sequence of random
variables $\{ \Vio_\titer \}_{\titer \geq 1 }$ via the non-stationary
autoregression
\begin{subequations}
\begin{align}
  \label{EqnAutoregression}
\Vio_{\titer + 1} = (1-\stepit{\titer}) \Vio_\titer + \stepit{\titer}
\nepsnoiseit{\titer},
\end{align}
with initialization $\Vio_1 = 0$.  Suppose that the stepsizes satisfy the
inequality
\begin{align}
  \label{EqnStepInequal}
(1 - \stepit{\titer}) \stepit{\titer-1} & \leq \stepit{\titer}.
\end{align}
\end{subequations}
The two choices of particular interest to us, both of which satisfy
this inequality, are the rescaled linear stepsize $\stepit{\titer}
\defn \frac{1}{1 + (1 - \discount) \titer}$, for which we have
\begin{align*}
(1 - \stepit{\titer}) \stepit{\titer-1} & = \frac{1}{1 + (1-\discount)
    \titer} \; \; \underbrace{\frac{(1-\discount) \titer }{1 +
      (1-\discount) (\titer-1)}}_{ \leq 1} \; \leq \; \stepit{\titer},
\end{align*}
and the polynomial stepsize $\stepit{\titer} = 1/\titer^\omega$, for
which we have
\begin{align*}
(1 - \stepit{\titer}) \stepit{\titer-1} & = \frac{1}{\titer^{\omega}}
  \; \; \underbrace{\frac{\titer^\omega -1 }{(\titer -
      1)^\omega}}_{\leq 1} \; \leq \; \stepit{\titer},
\end{align*}
where the inequality follows from the fact that $(\titer -1)^{\omega}
\geq \titer^\omega - 1$.
\begin{lemma}[Noise recursions]
\label{LemNoiseRecursions}  
  Suppose that the noise sequence $\{\nepsnoiseit{\titer} \}_{\titer
    \geq 1}$ consists of i.i.d. variables, each with zero mean,
  bounded as $|\nepsnoiseit{\titer}| \leq \hanabou$ almost surely, and
  with variance at most $\sigma^2$.  Then for any sequence of
  stepsizes in the interval $(0,1)$ and satisfying the
  bound~\eqref{EqnStepInequal}, we have
  \begin{align}
\label{EqnNoiseRecursions}    
\log \Exs \big[ e^{\slam \Vio_\titer} \big] & \leq \frac{\slam^2
  \sigma^2 \stepit{\titer-1}}{1 - \hanabou \stepit{\titer-1} |\slam|}
\qquad \mbox{for any $|\slam| < \frac{1}{\hanabou
    \stepit{\titer-1}}$.}
\end{align}
\end{lemma}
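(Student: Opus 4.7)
The plan is to prove the bound by induction on $t$, using the standard Bernstein-type MGF inequality for bounded zero-mean random variables: for a random variable $\varepsilon$ with $\mathbb{E}[\varepsilon]=0$, $|\varepsilon|\leq B$ almost surely, and $\mathrm{Var}(\varepsilon)\leq \sigma^2$, we have $\log\mathbb{E}[e^{\mu\varepsilon}]\leq \frac{\mu^2\sigma^2}{1-B|\mu|}$ for $|\mu|<1/B$. The base case $t=2$ follows immediately, since $V_2=\alpha_1\varepsilon_1$ and Bernstein applied with $\mu=\lambda\alpha_1$ yields $\log\mathbb{E}[e^{\lambda V_2}] \leq \frac{\lambda^2\alpha_1^2\sigma^2}{1-B\alpha_1|\lambda|}\leq \frac{\lambda^2\alpha_1\sigma^2}{1-B\alpha_1|\lambda|}$, using $\alpha_1\leq 1$.

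For the inductive step, observe that $V_t$ depends only on $\{\varepsilon_s\}_{s<t}$ and hence is independent of $\varepsilon_t$. This lets me split the cumulant generating function as
\begin{align*}
\log\mathbb{E}[e^{\lambda V_{t+1}}]
= \log\mathbb{E}[e^{\lambda(1-\alpha_t)V_t}] + \log\mathbb{E}[e^{\lambda\alpha_t\varepsilon_t}].
\end{align*}
Applying the inductive hypothesis at time $t$ with effective parameter $\mu=(1-\alpha_t)\lambda$, and Bernstein to the second term, I obtain
\begin{align*}
\log\mathbb{E}[e^{\lambda V_{t+1}}]
\leq \frac{(1-\alpha_t)^2\alpha_{t-1}\lambda^2\sigma^2}{1-B\alpha_{t-1}(1-\alpha_t)|\lambda|}
+ \frac{\alpha_t^2\lambda^2\sigma^2}{1-B\alpha_t|\lambda|}.
\end{align*}

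The main algebraic step—and the place where the stepsize hypothesis~\eqref{EqnStepInequal} earns its keep—is to apply $(1-\alpha_t)\alpha_{t-1}\leq \alpha_t$ \emph{twice}: once in the denominator of the first term, to get $1-B\alpha_{t-1}(1-\alpha_t)|\lambda|\geq 1-B\alpha_t|\lambda|$ so that both terms share a common denominator; and once in the numerator of the first term, via $(1-\alpha_t)^2\alpha_{t-1} = (1-\alpha_t)\cdot(1-\alpha_t)\alpha_{t-1} \leq (1-\alpha_t)\alpha_t$. Adding the two fractions then collapses the numerator to $(1-\alpha_t)\alpha_t\lambda^2\sigma^2 + \alpha_t^2\lambda^2\sigma^2 = \alpha_t\lambda^2\sigma^2$, closing the induction.

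The only subtle bookkeeping—and the most likely spot for a sign or indexing slip—is verifying that the domain of validity propagates: to invoke the inductive hypothesis I need $|(1-\alpha_t)\lambda|<1/(B\alpha_{t-1})$, which is ensured by the target constraint $|\lambda|<1/(B\alpha_t)$ precisely because the stepsize inequality gives $\alpha_{t-1}(1-\alpha_t)\leq \alpha_t$. No genuine obstacle here, but the proof does rely essentially on the monotonicity-type constraint~\eqref{EqnStepInequal}; that condition is the whole reason the telescoping in the numerator yields $\alpha_t$ (rather than a product that blows up), and it is what makes the bound $O(\alpha_{t-1}\lambda^2\sigma^2)$ rather than the much larger $O(\sum_s \alpha_s^2 \lambda^2\sigma^2)$ one would naively get from summing variances along the autoregression.
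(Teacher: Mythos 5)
Your proof is correct and follows essentially the same route as the paper's: the same Bernstein-type MGF bound, the same induction with the cumulant generating function split by independence, and the same double application of the stepsize inequality $(1-\alpha_t)\alpha_{t-1}\leq\alpha_t$ in numerator and denominator to collapse the two fractions to $\frac{\lambda^2\sigma^2\alpha_t}{1-B\alpha_t|\lambda|}$. The domain-of-validity bookkeeping you flag is handled identically in the paper, via $\min\{1/(B\alpha_t),\,1/(B(1-\alpha_t)\alpha_{t-1})\}=1/(B\alpha_t)$.
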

\begin{proof}
Given the assumed boundedness of $\nepsnoiseit{\titer}$, a standard
argument (see Chapter 2 in the book~\cite{Wai19}) yields that
\begin{align}
  \label{EqnBasic}
  \log \Exs \big[ e^{t \nepsnoiseit{1}} \big] & \leq \frac{t^2
    \sigma^2}{1 - \hanabou |t| } \qquad \mbox{for all $|t| <
    1/\hanabou$.}
\end{align}
We use this bound repeatedly in the argument.

We prove the claim~\eqref{EqnNoiseRecursions} via induction on the
index $\titer$.  The statement is vacuous for $\titer = 1$, since
$\Vio_1 = 0$.  We have $\Vio_2 = \stepit{1} \nepsnoiseit{1}$.  By the
assumptions on $\nepsnoiseit{1}$, for any $|\slam| <
\frac{1}{\stepit{1} \hanabou}$, we have
\begin{align*}
    \log \Exs \big[ e^{\slam \Vio_{2}} \big] = \log \Exs \big[
      e^{\slam \stepit{1} \nepsnoiseit{1}} \big] &
    \stackrel{(i)}{\leq} \frac{\slam^2 \sigma^2 \stepit{1}^2}{1 -
      \hanabou \stepit{1} |\slam|} \; \stackrel{(ii)}{\leq} \;
    \frac{\slam^2 \sigma^2 \stepit{1}}{1 - \hanabou \stepit{1}
      |\slam|},
  \end{align*}
  where step (i) uses the bound~\eqref{EqnBasic} with $t = \slam
  \stepit{1}$; and step (ii) follows since $\stepit{1} \leq 1$.

  We now assume that the claim holds at iteration $\titer$, and then
  verify that it holds at iteration $\titer + 1$.  We have
  \begin{align*}
\log \Exs[e^{\slam \Vio_{\titer + 1}}] & \stackrel{(i)}{=}\log
\Exs[e^{\slam (1-\stepit{\titer}) \Vio_\titer}] + \log \Exs
\big[e^{\slam \stepit{\titer} \nepsnoiseit{\titer}} \big]
\stackrel{(ii)}{\leq} \frac{\slam^2 (1-\stepit{\titer})^2 \sigma^2
  \stepit{\titer-1}}{1 - |\slam| (1-\stepit{\titer}) \stepit{\titer-1}
  \hanabou} + \frac{\slam^2 \sigma^2 \stepit{\titer}^2 }{1 - |\slam|
  \stepit{\titer} \hanabou}
  \end{align*}
  where equality (i) follows from the independence of $\Vio_\titer$
  and $\nepsnoiseit{\titer}$; and inequality (ii) uses the
  bound~\eqref{EqnBasic} as well as the induction hypothesis, and
  holds for all $|\slam| \leq \min \left \{
  \frac{1}{\stepit{\titer}\hanabou}, \frac{1}{ (1-\stepit{\titer})
    \stepit{\titer-1} \hanabou } \right\} = \frac{1}{\stepit{\titer}
    \hanabou}$.  Once again using the assumed
  bound~\eqref{EqnStepInequal} on the stepsizes, we have
\begin{align*}
\log \Exs[e^{\slam \Vio_{\titer + 1}}] & \leq (1 - \stepit{\titer})
\frac{\slam^2 \sigma^2 \stepit{\titer}}{1 - |\slam| \stepit{\titer}
  \hanabou} + \stepit{\titer} \frac{\slam^2 \sigma^2 \stepit{\titer}
}{1 - |\slam| \stepit{\titer} \hanabou} \;= \; \frac{\slam^2 \sigma^2
  \stepit{\titer}}{1 - |\slam| \stepit{\titer} \hanabou},
  \end{align*}
This inequality is valid for any $|\slam| < \frac{1}{\hanabou
  \stepit{\titer}}$, which establishes the
claim~\eqref{EqnNoiseRecursions}.
\end{proof}


\subsubsection{Controlling the expected values of $\|\pathit{\titer}\|_\infty$}

We now state and prove a lemma that allows us to control the expected
values of the $\ell_\infty$-norms of the random sequence
$\{\pathit{\titer} \}_{\titer \geq 1}$ defined via the
recursion~\eqref{EqnDefnPathit}.  The proof of this lemma makes use of
Lemma~\ref{LemNoiseRecursions}.

\begin{lemma}[Bounds on expected values]
\label{LemExpectedBound}  
Consider the sequence $\{\pathit{\titer} \}_{\titer \geq 1}$ generated
by some sequence of stepsizes in the interval $(0,1)$ and satisfying
the bound~\eqref{EqnStepInequal}. Then there is a universal constant
$\unicon$ such that
\begin{align}
\label{EqnExpectedBound}
    \Exs[\|\pathit{\titer}\|_\infty] & \leq \unicon \left \{
    \sqrt{\stepit{\titer}} \| \sigma(\thetastar)\|_\infty \sqrt{\log
      (2 D)} + \stepit{\titer} \spannorm{\thetastar} \log(2 D) \right
    \}.
  \end{align}
\end{lemma}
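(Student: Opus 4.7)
\emph{Proof proposal for Lemma~\ref{LemExpectedBound}.}

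The plan is to reduce the claim to a coordinatewise application of Lemma~\ref{LemNoiseRecursions} followed by a standard sub-exponential maximal inequality. First, observe that for any fixed state-action pair $(\state, \action) \in \StateSpace \times \ActionSpace$, the scalar process $\pathit{\titer}(\state, \action)$ evolves according to the non-stationary autoregression
\begin{align*}
\pathit{\titer+1}(\state, \action) & = (1 - \stepit{\titer})
\pathit{\titer}(\state, \action) + \stepit{\titer} \,
\wnoiseit{\titer}(\state, \action), \qquad \pathit{1}(\state, \action) = 0,
\end{align*}
which is exactly the recursion~\eqref{EqnAutoregression}. By the form~\eqref{EqnEasyEffective} of the effective noise, the variables $\{\wnoiseit{\titer}(\state, \action)\}_{\titer \geq 1}$ are i.i.d., mean zero, have variance bounded by $\myqvarsq{\thetastar}(\state, \action) \leq \|\myqvar{\thetastar}\|_\infty^2$, and are almost surely bounded by some $\hanabou \leq \discount \spannorm{\thetastar}$, since $\wnoiseit{\titer}(\state, \action) = \discount \big( f(\xstate{\titer}) - \Exs f(\xstate{}') \big)$ where $f(\xstate{}) \defn \max_{\action'} \thetastar(\xstate{}, \action')$ has span seminorm at most $\spannorm{\thetastar}$.

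Next I would verify that both stepsize choices of interest satisfy the key inequality~\eqref{EqnStepInequal}; this is a short algebraic check that was essentially written out immediately before Lemma~\ref{LemNoiseRecursions}. Applying Lemma~\ref{LemNoiseRecursions} then yields, for each coordinate,
\begin{align*}
\log \Exs\big[e^{\slam \pathit{\titer}(\state, \action)}\big] & \leq
\frac{\slam^2 \|\myqvar{\thetastar}\|_\infty^2 \stepit{\titer-1}}{1 - \hanabou \stepit{\titer-1} |\slam|}
\qquad \text{for all } |\slam| < \tfrac{1}{\hanabou \stepit{\titer-1}},
\end{align*}
showing that $\pathit{\titer}(\state, \action)$ is sub-exponential with Bernstein parameters $(\nu, b)$ where $\nu^2 \asymp \|\myqvar{\thetastar}\|_\infty^2 \stepit{\titer-1}$ and $b \asymp \hanabou \stepit{\titer-1}$.

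The final step is to convert this coordinatewise MGF bound into an expected $\ell_\infty$-norm bound. Considering $\pm \pathit{\titer}(\state, \action)$ across all $2D$ signed entries, a standard maximal inequality for sub-exponential random variables (e.g.\ via the usual Chernoff-then-union-bound argument) yields
\begin{align*}
\Exs[\|\pathit{\titer}\|_\infty] & \lesssim \nu \sqrt{\log(2D)} + b \log(2D)
\; \lesssim \; \|\myqvar{\thetastar}\|_\infty \sqrt{\stepit{\titer-1} \log(2D)}
+ \spannorm{\thetastar} \stepit{\titer-1} \log(2D).
\end{align*}
To obtain the stated form~\eqref{EqnExpectedBound}, I would conclude by noting that for both stepsize families of interest, one has $\stepit{\titer-1} \leq 2 \stepit{\titer}$: for the rescaled linear stepsize the ratio is $\frac{1+(1-\discount)\titer}{1+(1-\discount)(\titer-1)} \leq 2$, and for $\stepit{\titer} = \titer^{-\omega}$ the ratio is $\big(\titer/(\titer-1)\big)^\omega \leq 2^\omega$. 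Absorbing this factor into the universal constant $\unicon$ gives the claim. I expect no serious obstacles; the only subtlety is to keep track of the factor $\discount$ in the bound on $\hanabou$ and the fact that $\sqrt{\stepit{\titer-1}}$ rather than $\stepit{\titer-1}$ appears in the sub-Gaussian portion of the Bernstein bound, which is handled automatically by the standard maximal inequality.
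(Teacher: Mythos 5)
Your proposal is correct and follows essentially the same route as the paper: identify each coordinate $\pathit{\titer}(\state,\action)$ as a non-stationary autoregression driven by i.i.d.\ noise bounded by $\discount\spannorm{\thetastar}$ with variance at most $\|\myqvar{\thetastar}\|_\infty^2$, apply Lemma~\ref{LemNoiseRecursions} to get a Bernstein-type MGF bound, and then pass to $\Exs[\|\pathit{\titer}\|_\infty]$ via the standard sub-exponential maximal inequality over the $2D$ signed coordinates (the paper sums the coordinatewise MGFs and invokes Exercise 2.8(a) of Wainwright, which is the same device). Your explicit handling of the $\stepit{\titer-1}$ versus $\stepit{\titer}$ index via $\stepit{\titer-1}\leq 2\stepit{\titer}$ is, if anything, more careful than the paper, which silently writes $\stepit{\titer}$ at that step.
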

\begin{proof}
For a given $\slam \in \real$, we have $\Exs \Big[ e^{\slam
    \|\pathit{\titer} \|_\infty} \Big] \leq \sum_{(\state, \action)
  \in \StateSpace \times \ActionSpace} \Exs \Big[ e^{\slam
    \pathit{\titer}(\state, \action)} \Big]$.  Now each random
variable $\pathit{\titer}(\state, \action)$ is an autoregressive
sequence of the type~\eqref{EqnAutoregression}, in which the
underlying noise variables are bounded by $\discount
\spannorm{\thetastar}$, and have variance at most
$\|\myqvarsq{\thetastar}\|_\infty$.  Consequently, from the result of
Lemma~\ref{LemNoiseRecursions}, we have
\begin{align*}
\max_{(\state, \action) \in \StateSpace \times \ActionSpace} \Exs
\left[ e^{\slam \pathit{\titer}(\state, \action)} \right] & \leq \exp
\left( \frac{\slam^2 \|\myqvarsq{\thetastar}\|_\infty
  \stepit{\titer}}{1 - \discount \spannorm{\thetastar} \stepit{\titer}
  |\slam|} \right),
\end{align*}
  and hence
  \begin{align*}    
\Exs \Big[ e^{\slam \|\pathit{\titer} \|_\infty} \Big] & \leq
\underbrace{|\StateSpace| \times |\ActionSpace|}_{ = : \; D} \; \exp
\left( \frac{\slam^2 \|\myqvarsq{\thetastar}\|_\infty
  \stepit{\titer}}{1 - \discount \spannorm{\thetastar} \stepit{\titer}
  |\slam|} \right).
  \end{align*}
Since $\|\pathit{\titer}\|_\infty$ is a non-negative random variable,
the result of Exercise 2.8~(a) in Wainwright~\cite{Wai19} can be
applied, and it yields the claimed bound~\eqref{EqnExpectedBound}.
\end{proof}

\subsection{Proof of Corollary~\ref{CorQlearnLinear}}
\label{AppProofCorQlearnLinear}

Substituting the rescaled linear stepsize choice in the
bound~\eqref{EqnDetClaim} from Corollary~\ref{CorDetClaim} and then
taking expectations, we find that
\begin{align}
  \label{EqnEspresso}
  \Exs \Big[ \|\thetait{\titer+1} - \thetastar \|_\infty \Big] & \leq
  \frac{\|\thetait{1} - \thetastar\|_\infty}{1 + (1-\discount) \titer}
  + \frac{\discount}{1 + (1-\discount) \titer} \sum_{\iter=1}^\titer
  \Exs[\|\pathit{\iter}\|_\infty] +
  \Exs[\|\pathit{\titer+1}\|_\infty].
\end{align}
Next we make use of the bound~\eqref{EqnExpectedBound} from
Lemma~\ref{LemExpectedBound} to control the expected values in
equation~\eqref{EqnEspresso}. Doing so yields
\begin{align*}
  \Exs \Big[ \|\thetait{\titer+1} - \thetastar \|_\infty \Big] & \leq
  \frac{\|\thetait{1} - \thetastar\|_\infty}{1 + (1-\discount) \titer}
  + \frac{\unicon \|\myqvar{\thetastar}\|_\infty \sqrt{\log (2 D)}}{1
    + (1-\discount) \titer} \Term_\titer + \frac{\unicon \,
    \spannorm{\thetastar} \log(2D)}{1 + (1-\discount) \titer}
  \TermB_\titer,
\end{align*}
where $\Term_\titer \defn \frac{1}{\sqrt{\stepit{\titer}}} +
\sum_{\iter=1}^\titer \sqrt{\stepit{\iter}}$ and $\TermB_{\titer}
\defn 1 + \sum_{\iter=1}^\titer \stepit{\iter}$.

In order to complete the proof, it suffices to show that there is a
universal constant $\unicontwo$ such that
\begin{align*}
  \Term_\titer \leq \frac{\unicontwo \sqrt{1 + (1-\discount)
      \titer}}{1 - \discount} \quad \mbox{and} \quad \TermB_\titer
  \leq \frac{\unicontwo \log \big( e + e (1-\discount) \titer \big)}{1 -
    \discount}.
\end{align*}
Beginning with the bound on $\Term_\titer$ and recalling our definition of the
rescaled linear stepsizes, we have
\begin{align*}
\sum_{\iter=1}^\titer \sqrt{\stepit{\iter}} = \sum_{\iter = 1}^\titer
\frac{1}{\sqrt{1 + (1-\discount) \iter}} & \leq \int_1^\titer
\frac{1}{\sqrt{1 + (1-\discount) s}} ds \; \leq \; \frac{2 \sqrt{1 +
    (1-\discount) \titer}}{1-\discount}.
\end{align*}
Combining with the additional $\frac{1}{\sqrt{\stepit{\titer}}}$ term
  yields the claimed bound on $\Term_\titer$.

Turning to the bound on $\TermB_\titer$, we have
\begin{align*}
\TermB_\titer \; = \; 1 + \sum_{\iter = 1}^\titer \frac{1}{1 +
  (1-\discount) \iter} \; \leq \; 1 + \int_1^\titer \frac{1}{1 +
  (1-\discount) s} ds & \leq 1 + \frac{\log(1 + (1-\discount)
  \titer)}{1-\discount} \\
& \leq \frac{2 \log(e +
    (1-\discount) \titer)}{1-\discount},
\end{align*}
which establishes the claim.


\subsection{Proof of Corollary~\ref{CorQlearnPoly}}
\label{AppProofCorQlearnPoly}

We now turn to the proof of our corollary on $Q$-learning with
polynomial stepsizes $\stepit{\titer} = \titer^{-\omega}$.  We require
an auxiliary lemma on exponentially-weighted sums:
\begin{lemma}[Bounds on exponential-weighted sums]
  \label{LemExpBound}
There is a universal constant $\unicon$ such that for all
\mbox{$\omega \in (0, 1)$} and for all \mbox{$\titer \geq \big(\frac{3
    \omega}{ 2(1 - \discount)})^{\frac{1}{1-\omega}}$,} we have
\begin{subequations}
  \begin{align}
 \label{EqnAnnoyA}
e^{- \frac{1-\discount}{1-\omega} \titer^{1-\omega}}
\sum_{\iter=1}^\titer \frac{e^{\frac{1-\discount}{1-\omega}
    \iter^{1-\omega}}}{\iter^{3 \omega/2}} & \leq \unicon \left \{
\frac{e^{-\frac{1-\discount}{1-\omega} \big(\titer^{1-\omega} - 1
    \big)}}{(1-\discount)^{\frac{1}{1-\omega}}} +
\frac{1}{(1-\discount)} \frac{1}{\titer^{\omega/2}} \right \} \quad
\mbox{and} \\
\label{EqnAnnoyB}
e^{- \frac{1-\discount}{1-\omega} \titer^{1-\omega}}
\sum_{\iter=1}^\titer \frac{e^{\frac{1-\discount}{1-\omega}
    \iter^{1-\omega}}}{\iter^{2\omega}} & \leq \unicon \left \{
\frac{e^{-\frac{1-\discount}{1-\omega} \big(\titer^{1-\omega} - 1
    \big)}}{(1-\discount)^{\frac{1}{1-\omega}}} +
\frac{1}{(1-\discount)} \; \frac{1}{\titer^{3 \omega/2}} \right \}.
\end{align}
\end{subequations}
\end{lemma}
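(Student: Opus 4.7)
The plan is to prove both bounds~\eqref{EqnAnnoyA} and~\eqref{EqnAnnoyB} by a common strategy: compare each exponentially weighted sum to a corresponding integral, and then apply integration by parts (IBP) exploiting the identity $\frac{d}{ds}[e^{\phi(s)}] = (1-\discount) s^{-\omega} e^{\phi(s)}$, where $\phi(s) \defn \frac{1-\discount}{1-\omega} s^{1-\omega}$. Rewriting the integrand as $s^{-\alpha} e^{\phi(s)} = \frac{s^{\omega-\alpha}}{1-\discount} \frac{d}{ds}[e^{\phi(s)}]$ and integrating by parts yields, for any $\alpha > \omega$,
\begin{align*}
\int_1^\titer s^{-\alpha} e^{\phi(s)}\, ds = \frac{\titer^{\omega-\alpha} e^{\phi(\titer)}}{1-\discount} - \frac{e^{\phi(1)}}{1-\discount} + \frac{\alpha - \omega}{1-\discount} \int_1^\titer s^{-(\alpha + 1 - \omega)} e^{\phi(s)}\, ds.
\end{align*}
Dividing through by $e^{\phi(\titer)}$, the scaled integral is equal to $\frac{\titer^{\omega-\alpha}}{1-\discount}$ plus a boundary contribution of order $\frac{e^{\phi(1)-\phi(\titer)}}{1-\discount}$ and a residual integral with shifted exponent $\alpha + 1 - \omega$.

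First, I would convert the discrete sums to integrals. Since $s \mapsto s^{-\alpha} e^{\phi(s)}$ is decreasing for $s$ below $s_\ast \defn (\alpha/(1-\discount))^{1/(1-\omega)}$ and increasing afterward, I would split each sum at $s_\ast$ and use the standard monotone sum-to-integral comparison on each piece, with a harmless additive loss at the endpoints. This reduces the task to bounding $e^{-\phi(\titer)} \int_1^\titer s^{-\alpha} e^{\phi(s)}\, ds$ with $\alpha \in \{3\omega/2,\, 2\omega\}$.

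Next, I would iterate the IBP identity above. A single application handles the case $\alpha = 3\omega/2$ and delivers the stated $\titer^{-\omega/2}/(1-\discount)$ term; for $\alpha = 2\omega$, I would apply IBP a second time to the residual integral (now with exponent $\alpha + 1 - \omega = \omega + 1$) to produce the sharper $\titer^{-(\omega+1)+\omega} = \titer^{-1}$ scaling that, combined with the hypothesis on $\titer$, can be rewritten in the claimed form. The hypothesis $(1-\discount)\titer^{1-\omega} \geq 3\omega/2$ is precisely what guarantees that the geometric-like ratio between successive IBP corrections, namely $\frac{\alpha + j(1-\omega) - \omega}{(1-\discount)\titer^{1-\omega}}$, stays strictly below $1$ for the first few $j$, so the recursion contracts and the collected residual is absorbed into the main term.

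Finally, the boundary term $-e^{\phi(1)}/(1-\discount)$ from each IBP, together with the contribution of the "head" $\iter \lesssim T_0 \defn \bigl( \tfrac{1-\omega}{1-\discount} \bigr)^{1/(1-\omega)}$ of the sum (where $e^{\phi(\iter)}$ is bounded by a constant so that $e^{-\phi(\titer)} e^{\phi(\iter)} \leq e^{\phi(1) - \phi(\titer)}$), contributes at most $\frac{e^{\phi(1) - \phi(\titer)}}{(1-\discount)^{1/(1-\omega)}}$, matching the first RHS summand; the $(1-\discount)^{-1/(1-\omega)}$ factor arises because there are up to $T_0 \sim (1-\discount)^{-1/(1-\omega)}$ such head indices. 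The main technical obstacle will be the bookkeeping of constants across the iterated IBP and ensuring that the combinatorial factors $\prod_j (\beta_j - \omega)$ generated by successive applications are dominated by the hypothesis $(1-\discount)\titer^{1-\omega} \geq 3\omega/2$; once this contraction is verified, summing the geometric series of residuals gives the two bounds with an $\omega$-dependent universal constant $\unicon$.
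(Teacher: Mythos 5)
Your treatment of the first bound~\eqref{EqnAnnoyA} is essentially the paper's own argument: writing $\phi(s) = \frac{1-\discount}{1-\omega}s^{1-\omega}$, you split the sum at the minimizer of $s \mapsto s^{-3\omega/2}e^{\phi(s)}$ (the paper's $c^* = \big(\tfrac{3\omega/2}{1-\discount}\big)^{1/(1-\omega)}$), compare to an integral, integrate by parts once, and absorb the residual integral by a self-bounding step. One correction to your bookkeeping: the residual integral $\int_{c^*}^{\titer} s^{-(1-\omega)}f(s)\,ds$ must be controlled by pulling out $(c^*)^{-(1-\omega)}$ --- the maximum of the decreasing factor on $[c^*,\titer]$ --- not $\titer^{-(1-\omega)}$ as your ratio formula indicates; since $(1-\discount)(c^*)^{1-\omega} = 3\omega/2$ exactly, the contraction factor is $\tfrac{\omega/2}{3\omega/2}=\tfrac13$ and the argument closes just as in the paper.

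The genuine gap is in your plan for~\eqref{EqnAnnoyB}. A second integration by parts does not ``produce the sharper $\titer^{-1}$ scaling'': the first integration by parts already leaves the boundary term $\frac{\titer^{\omega-2\omega}e^{\phi(\titer)}}{1-\discount} = \frac{e^{\phi(\titer)}}{(1-\discount)\titer^{\omega}}$, and iterating only shrinks the \emph{residual integral}, never this leading term. Indeed, no argument can reach $\titer^{-3\omega/2}$: localizing the sum near $\iter=\titer$, where $\phi(\titer)-\phi(\iter)\approx(1-\discount)\titer^{-\omega}(\titer-\iter)$, shows that
\begin{align*}
e^{-\phi(\titer)}\sum_{\iter=1}^{\titer}\frac{e^{\phi(\iter)}}{\iter^{2\omega}} \;\asymp\; \frac{1}{(1-\discount)\,\titer^{\omega}}
\end{align*}
for large $\titer$, which eventually exceeds $\frac{\unicon}{(1-\discount)\titer^{3\omega/2}}$ while the other term on the right-hand side decays superpolynomially. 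The exponent $3\omega/2$ in~\eqref{EqnAnnoyB} is evidently a typo for $\omega$ --- the proof of Corollary~\ref{CorQlearnPoly} uses only a $\titer^{-\omega}$ rate for the $\spannorm{\thetastar}$ term --- and with that correction a single integration by parts, exactly as in your treatment of~\eqref{EqnAnnoyA} with $\alpha=2\omega$ and split point $\big(\tfrac{2\omega}{1-\discount}\big)^{1/(1-\omega)}$, suffices; the second integration by parts is neither needed nor helpful.
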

\noindent We return to prove this claim in
Appendix~\ref{AppLemExpBound}.

Taking Lemma~\ref{LemExpBound} as given, we first take expectations
over the noise in the bound~\eqref{EqnDetClaimPoly} from
Corollary~\ref{CorDetClaimPoly}.  Using Lemma~\ref{LemExpectedBound}
to control the expected values yields
\begin{align*}
\Exs \left[ \idnorm{\thetait{\titer+1} - \thetastar} \right] & \leq
e^{- \frac{1 - \discount}{1-\omega} (\titer^{1 - \omega} - 1)}
\idnorm{\thetait{1} - \thetastar} + \sigma(\thetastar)\|_\infty
\sqrt{\log (2 D)} T_1 + \spannorm{\thetastar} \log(2 D) T_2,
\end{align*}
where
\begin{align*}
T_1 \defn 1 + e^{- \frac{1-\discount}{1-\omega} \titer^{1-\omega}}
\sum_{\iter=1}^\titer \frac{e^{ \frac{1-\discount}{1-\omega}
    \iter^{1-\omega}}}{\iter^{3 \omega/2}}, \quad \mbox{and} \quad T_2
\defn 1 + e^{- \frac{1-\discount}{1-\omega} \titer^{1-\omega}}
\sum_{\iter=1}^\titer \frac{e^{ \frac{1-\discount}{1-\omega}
    \iter^{1-\omega}}}{\iter^{2 \omega}}.
\end{align*}
Applying Lemma~\ref{LemExpBound} to bound $T_1$ and $T_2$ and
performing some algebra completes the proof of
Corollary~\ref{CorQlearnPoly}.


\subsubsection{Proof of Lemma~\ref{LemExpBound}}
\label{AppLemExpBound}
We prove the bound~\eqref{EqnAnnoyA}.  Define the function $f(s) =
\frac{e^{\frac{1-\discount}{1-\omega} s^{1-\omega}}}{\iter^{3
    \omega/2}}$.  By taking derivatives, we find that $f$ is
decreasing on the interval $[0, c^*]$ and increasing for $s > c^*$,
where $c^* = \big(\frac{3 \omega/2}{1-\discount} \big)^{\frac{1}{1 -
    \omega}}$.  Consequently, as long as $\titer \geq c^*$, we have
\begin{align*}
\sum_{\iter=1}^\titer f(s) & \leq c^* f(1) + \int_{c^*}^\titer f(s) ds
\end{align*}
Integrating by parts, we find that
\begin{align*}
\underbrace{\int_{c^*}^\titer f(s) ds}_{I^*} & = \frac{1}{1-\discount}
  \frac{e^{\frac{1-\discount}{1-\omega} s^{1-\omega}}}{s^{\omega/2}}
  \Big |_{c^*}^\titer + \frac{\omega}{2 (1-\discount)}
  \int_{c^*}^\titer \frac{e^{\frac{1-\discount}{1-\omega}}
    s^{1-\omega}}{s^{1 + (\omega/2)}} ds \\
& \leq \frac{1}{1-\discount} \frac{e^{\frac{1-\discount}{1-\omega}
  \titer^{1-\omega}}}{\titer^{\omega/2}} + \frac{\omega}{2
        (1-\discount)} \int_{c^*}^\titer f(s) \frac{1}{s^{1-\omega}}
  ds  \\
  & \leq \frac{1}{1-\discount} \frac{e^{\frac{1-\discount}{1-\omega}
      \titer^{1-\omega}}}{\titer^{\omega/2}} + \frac{\omega}{2
    (1-\discount)} \frac{1}{(c^*)^{1-\omega}}
  \underbrace{\int_{c^*}^\titer f(s) ds}_{I^*},
\end{align*}
where the final inequality uses the fact that $s \mapsto
1/s^{1-\omega}$ is non-negative and decreasing on the interval $[c^*,
  \titer]$.  Substituting in the expression for $c^*$, we find that
\begin{align*}
  I^* & \leq \frac{1}{1-\discount}
  \frac{e^{\frac{1-\discount}{1-\omega}
      \titer^{1-\omega}}}{\titer^{\omega/2}} + \frac{1}{3} I^*,
\end{align*}
which implies that $I^* \leq \frac{3}{2} \frac{1}{1-\discount}
\frac{e^{\frac{1-\discount}{1-\omega}
    \titer^{1-\omega}}}{\titer^{\omega/2}}$.  Putting together the pieces, we have
shown that
\begin{align*}
\sum_{\iter=1}^\titer f(s) & \leq c^* f(1) + \frac{3}{2}
\frac{1}{1-\discount} \frac{e^{\frac{1-\discount}{1-\omega}
    \titer^{1-\omega}}}{\titer^{\omega/2}} \\
& \leq \left (\frac{3}{1-\discount} \right)^{ \frac{1}{1-\omega}}
e^{\frac{1-\discount}{1-\omega}} + \frac{3}{2} \frac{1}{1-\discount}
\frac{e^{\frac{1-\discount}{1-\omega}
    \titer^{1-\omega}}}{\titer^{\omega/2}},
\end{align*}
which establishes the claim of the first bound~\eqref{EqnAnnoyA}.  The
proof of the second bound~\eqref{EqnAnnoyB} is analogous, so that we
omit the details here.


\section{Details of the ``hard'' example}
\label{AppHard}

The only states with non-trivial variances are states $2$ and $3$, for
any action.  For concreteness, let us focus on the state-action pair
$(\state, \action) = (2, L)$.
We have
\begin{align*}
  \myqvarsq{\thetastar}(2, L) & =
  p \left( \frac{1}{1 - p \discount} - p \frac{1}{1- p \discount} \right)^2
+   (1-p) \Big( 0 - p \frac{1}{1 - p
  \discount} \Big)^2  \\
& \leq \frac{1-p}{(1 - p \discount)^2} + \frac{1 - p}{(1 - p
  \discount)^2}.
\end{align*}
Note that $1 - p = \frac{1 - \discount}{3 \discount} \leq \frac{2}{3}
\frac{1}{1- \discount}$ for $\discount \in [1/2, 1]$, and moreover
$\frac{1}{1 - p \discount} = \frac{3}{4} \frac{1}{1- \discount}$, whence
\begin{align*}
  \myqvarsq{\thetastar}(2, L) & \leq 2 \frac{2}{3} \frac{3}{4}
  \frac{1}{1-\discount} \; = \frac{1}{1-\discount},
\end{align*}
which establishes the upper bound.
As for the lower bound, we have
\begin{align*}
\myqvarsq{\thetastar}(2, L) & \geq (1-p) \Big( 0 - p \frac{1}{1 - p
  \discount} \Big)^2 \; = \; (1 - p) p^2 \frac{9}{16}
\frac{1}{(1-\discount)^2} \; = \; \frac{p^2}{3 \discount} \frac{9}{16}
\frac{1}{1 - \discount},
\end{align*}
where we have used the fact that $1 - p = \frac{1-\discount}{3
  \discount}$.  As long as $\discount \geq 1/2$, then $p \geq 1/3$.  In conjunction
with the lower bound $\frac{1}{3 \discount} \geq 1/3$, we find that
\begin{align*}
\myqvarsq{\thetastar}(2, L) & \geq \frac{1}{9} \frac{1}{3}
\frac{9}{16} \; \frac{1}{1-\discount} \; = \; \frac{1}{48} \;
\frac{1}{1-\discount},
\end{align*}
which establishes the lower bound.




\end{document}